\newcommand{\rF}{\rm F}
\def\##1\#{\begin{align}#1\end{align}}
\def\$#1\${\begin{align*}#1\end{align*}}
\newcommand{\argmin}{\mathop{\mathrm{argmin}}}
\newtheorem{theorem}{Theorem}
\newtheorem{proposition}[theorem]{Proposition}
\newtheorem*{proof}{Proof}
\newtheorem{lemma}[theorem]{Lemma}
\begin{document}
%
\title{Quadratic Matrix Factorization with Applications to Manifold Learning}
%
%
%
%

\author{Zheng~Zhai,
        Hengchao~Chen,
        and~Qiang Sun
\IEEEcompsocitemizethanks{
\IEEEcompsocthanksitem Zheng Zhai, Hengchao Chen, and Qiang Sun are with the Department of Statistical Sciences,
       University of Toronto\\ 
\IEEEcompsocthanksitem Corresponding author: Qiang Sun. E-mail: qiang.sun@utoronto.ca
}
\thanks{Manuscript received April 19, 2005; revised August 26, 2015.}}

%
%

\markboth{Journal of \LaTeX\ Class Files,~Vol.~14, No.~8, August~2015}%
{Shell \MakeLowercase{\textit{et al.}}: Bare Demo of IEEEtran.cls for Computer Society Journals}
%



\IEEEtitleabstractindextext{%
\begin{abstract}
Matrix factorization is a popular framework for modeling low-rank data matrices. 
Motivated by manifold learning problems, this paper proposes a quadratic matrix factorization (QMF) framework to learn the curved manifold on which the dataset lies. Unlike local linear methods such as the local principal component analysis, QMF can better exploit the curved structure of the underlying manifold. Algorithmically, we propose an alternating minimization algorithm to optimize QMF and establish its theoretical convergence properties. Moreover, to avoid possible over-fitting, we then propose a regularized QMF algorithm and discuss how to tune its regularization parameter. Finally, we elaborate how to apply the regularized QMF to manifold learning problems. 
Experiments on a synthetic manifold learning dataset and two real datasets, including the MNIST handwritten dataset and a cryogenic electron microscopy dataset, demonstrate the superiority of the proposed method over its competitors. 
\end{abstract}

\begin{IEEEkeywords}
Quadratic matrix factorization, alternating minimization, convergence property, manifold learning. 
\end{IEEEkeywords}}

\maketitle

\IEEEdisplaynontitleabstractindextext

%
\IEEEpeerreviewmaketitle

\IEEEraisesectionheading{\section{Introduction}\label{sec:1}}

%
%
%
%

\IEEEPARstart{M}{atrix} factorization has achieved many successes in various applications, including factor models \citep{paatero1994positive}, clustering \citep{liu2013multi,wang2022robust}, recommendation system \citep{koren2009matrix}, graph and representation learning \citep{hamilton2017representation}. 
The key idea behind matrix factorization is that any data matrix admitting a low-rank structure can be represented as a product of two matrices with smaller dimensions.   
In a general form,  matrix factorization solves  
\begin{equation}\label{equ:1}
\min_{U\in {\mathbb R}^{D\times r}, {V\in {\mathbb R}^{r\times m} }
\atop 
U, V\in \mathcal{C}} \|X-UV\|_{\rF}^2.
\end{equation}
where $X\in \mathbb{R}^{D\times m}$ is the data matrix with dimension $D$ and sample size $m$, $U\in\mathbb{R}^{D\times r}$ and $V\in\mathbb{R}^{r\times m}$ are factors of rank $r$, $\mathcal{C}$ is the feasible set encoding additional structural information, and $\|\cdot\|_{\rF}$ is the Frobenius norm. 
Building upon \eqref{equ:1}, many well-known algorithms in the literature can be obtained by taking specific feasible sets $\mathcal{C}$.  
For example, if $\mathcal{C} =\{V: VV^T=I_r\}$ enforces orthonormal constraints on the rows of $V$, then \eqref{equ:1} reduces to principal component analysis (PCA). If $\mathcal{C}$ enforces non-negative constraints on both $U$ and $V$, or either $U$ or $V$, then \eqref{equ:1} becomes  nonnegative matrix factorization (NMF) \citep{lee2000algorithms} or semi-NMF \citep{ding2008convex} respectively. NMF also shares a strong connection to spectral clustering \citep{ding2005equivalence}. 


Although matrix factorization \eqref{equ:1} has been widely studied in various forms, it is less explored in the nonlinear case: some rows of $V$ might be nonlinear functions of other rows of $V$. 
Nonlinear matrix factorization naturally arises in manifold learning problems,
where data $\{x_i\}_{i=1}^m\subseteq{\mathbb R}^D$ are  assumed to concentrate near an unknown $d$-dimensional manifold $\cal M$ and the goal is to recover $\cal M$ from $\{x_i\}_{i=1}^m$~\citep{aamari2019nonasymptotic,aizenbud2021non}. 
Mathematically, assume $\{x_i\}_{i=1}^m$ are given by 
\begin{equation*}
x_i=f(\tau_i)+\epsilon_i,\quad  i=1,\ldots,m,
\end{equation*}
where $f$ is a bijective smooth mapping from an open set ${\cal T}\subseteq\mathbb{R}^d$ to ${\cal M}\subseteq\mathbb{R}^D$, $\tau_i\in{\cal T}$ is the $d$-dimensional representation of $x_i$, and $\epsilon_i$ is the $i$-th  approximation error. Here $f$ and $\tau_i$ are non-identifiable in the sense that $f(\tau_i)=f'(\tau'_i)$, where $f'=f\circ g$ and $\tau'_i=g^{-1}(\tau_i)$ for any diffeomorphism $g$ on $\cal T$, but $f(\tau_i)$ is uniquely defined. To find $f(\tau_i)$, we propose to minimize the residual sum of squares with respect to $f$ and $\tau_{i}$:
\begin{equation}\label{equ:3a}
	\min_{f\in{\cal F},\{\tau_i\}_{i=1}^m\subseteq\mathbb{R}^d}\sum_{i=1}^{m}\|x_i-f(\tau_i)\|_{\rF}^2.
\end{equation}
Here ${\cal F}=\{f:\mathbb{R}^d\mapsto\mathbb{R}^D\}$ is a prediction function class. 
Let
$X=(x_1,\ldots,x_m)$, $\Phi=(\tau_1,\ldots,\tau_m)$, and $\Xi(f,\Phi)=(f(\tau_1),\ldots,f(\tau_m))$. Then \eqref{equ:3a} can be rewritten as
\#\label{equ:3}
\min_{f\in{\cal F},\Phi\in\mathbb{R}^{m\times d}}\|X - \Xi(f,\Phi)\|_{\rF}^2.
\#

If $\cal F$ consists of linear functions only, i.e., ${\cal F}=\{f\mid f(\tau)=A\tau,
A\in\mathbb{R}^{D\times d}\}$,
then the optimization problem \eqref{equ:3} reduces to the matrix factorization problem \eqref{equ:1} with $U=A$, $V=\Phi$, and $\mathcal{C}= \mathbb{R}^{D\times r}\times \mathbb{R}^{r\times m}$. This is referred to as linear matrix factorization (LMF).
The linear assumption on  $f$ can be too restrictive for manifold learning problems because it does not take the curved structure into account and thus is only applicable to model flat manifolds, i.e., manifolds that are locally isometric to the Euclidean spaces. 
For general manifolds, it is better to consider $f$ as quadratic functions. Higher-order polynomial functions for $f$ are also possible but tend to overfit noisy data, rendering poor generalization performance. 
For the reasons above, this paper focuses on the quadratic function class
\begin{equation}\label{equ:F}
\begin{gathered}
{\cal F}=\{f(\tau)=c+A\tau+{\cal B}(\tau,\tau): c\in\mathbb{R}^D,A\in\mathbb{R}^{D\times d},\\
\textnormal{ symmetric tensor }\mathcal{B}\in\mathbb{R}^{D\times d\times d}\}.
\end{gathered}
\end{equation}
For any quadratic function $f$, there exists a unique matrix $R\in\mathbb{R}^{D\times (2+3d+d^2)/2}$ such that $f(\tau)=R\xi(\tau)$, where
\begin{equation}\label{xi}
\begin{gathered}
\xi(\tau)=[1,\tau^T,\psi(\tau)^T]^T\\
\psi(\tau)= [\tau_{[1]}^2,\tau_{[1]}\tau_{[2]},...,\tau_{[1]}\tau_{[d]},\tau^2_{[2]},...,\tau_{[d]}^2]^T\in\mathbb{R}^{(d^2+d)/2}.
\end{gathered}
\end{equation}
Here $\tau_{[i]}$ denotes the $i$-th coordinate of $\tau$ and $\psi(\cdot)$ maps $\tau$ to a vector consisting of all quadratic and interaction terms of $\{\tau_{[i]}\}_{i=1}^d$.  Let $T(\Phi)=(\xi(\tau_1),\ldots,\xi(\tau_m))$. Then the optimization problem \eqref{equ:3} with the  quadratic function class \eqref{equ:F} reduces to
$\min_{R,\Phi}\|X-RT(\Phi)\|_{\rF}^2.$
To make $\Phi$ identifiable, we propose to solve the following optimization problem 
\#
\min_{R,\Phi
\atop
\Phi\Phi^T=I_d,\Phi{\bf 1}_m={\bf 0}
}\|X-RT(\Phi)\|_{\rF}^2.\label{equ:6}
\#
This is again a special case of the general matrix factorization problem \eqref{equ:1}, and we emphasize that  $T(\Phi)$ encodes an implicit constraint that the last $(d^2+d)/2$ rows of $T(\Phi)$ are quadratic functions of its second-to-$(1+d)$-th rows. Thus, we refer to  \eqref{equ:6} as quadratic matrix factorization~(QMF).

LMF has been widely studied \citep{lee2000algorithms,wang2012nonnegative,gillis2015exact,abdolali2021simplex}, but the proposed QMF is much more difficult and has received little attention.
In this paper, we propose to optimize the QMF problem \eqref{equ:6} with respect to $\Phi$ and $R$ alternatively.
Specifically, with $\Phi$ fixed, optimizing \eqref{equ:6} with respect to $R$ is equivalent to a linear regression problem. With $R$ fixed, minimizing \eqref{equ:6} over $\Phi$ is a non-convex quadratic projection problem, i.e., projecting a target point onto the quadratic surface determined by $R$.  
This quadratic projection problem can be efficiently solved by an alternating minimization algorithm when $\|RJ\|_{\rF}$ is small, where $J=({\bf 0}\ I_{(d^2+d)/2})^T\in\mathbb{R}^{(2+3d+d^2)/2\times (d^2+d)/2}$. This motivates us to add a regularizer $\lambda\|RJ\|_{\rF}^2$ to \eqref{equ:6} and solve the regularized QMF problem:
\#
\min_{R,\Phi\atop \Phi\Phi^T=I_d,\Phi{\bf 1}_m={\bf 0}}\ell_{\lambda}(R,\Phi)=\|X-RT(\Phi)\|_{\rF}^2+\lambda\|RJ\|_{\rF}^2.\label{equ:R}
\#
To solve \eqref{equ:R}, an alternating minimization algorithm is proposed in Section \ref{sec:3}. We also discuss how to tune $\lambda$ properly. 
 
Our contributions are four-fold. First, motivated by manifold learning problems, we introduce the quadratic matrix factorization model and propose an alternating minimization algorithm for solving~\eqref{equ:6}.
Second, we establish the theoretical convergence property of the QMF algorithm.
Third, motivated by the theoretical analysis of the QMF algorithm, we propose a regularized QMF algorithm and give an adaptive parameter tuning method.  Finally, we apply the regularized QMF algorithm to solve general manifold learning problems. Numerically, we examine the performance of the proposed method on a synthetic manifold learning dataset and two real-world  datasets, including the MNIST handwritten dataset and a cryogenic electron microscopy dataset, and demonstrate the superiority of the proposed method over its competitors.

\subsection{Related Work and Paper Organization}

Our QMF model is different from the problems studied in \citep{yang2011unified,yang2012quadratic}, which are also referred to as quadratic matrix factorization. They consider approximating a matrix by a product of multiple low-rank matrices, and some factor matrix appears twice in the approximation, that is, $X\approx UU^T$ or $X\approx AUBU^TC$ with $U$ being an unknown factor. They focus on estimating $U$. In contrast, our paper is motivated by manifold learning problems and focuses on solving \eqref{equ:1} with quadratic constraints: some rows of $V$ are quadratic functions of the other rows of $V$. Their algorithms cannot be applied to solve our QMF problem \eqref{equ:6}.

Many manifold learning methods, such as LLE \citep{roweis2000nonlinear}, Isomap \citep{tenenbaum2000global}, Laplacian eigenmaps \citep{belkin2003laplacian} and diffusion maps \citep{coifman2006diffusion}, aim to find a lower-dimensional representation of the dataset while preserving certain geometric structures. In contrast, our target is to recover the underlying manifold structure on which the dataset lies. Most algorithms with the same purpose are based on tangent space estimation \citep{ozertem2011locally,genovese2014nonparametric,fefferman2018fitting,gong2010locally,chen2015asymptotic}. These methods share one common limitation that they do not take the higher-order smoothness into account. Compared with the aforementioned methods, the local polynomial approximation algorithm, which takes higher-order smoothness into account, could achieve a better convergence rate as shown in \citep{aamari2019nonasymptotic}. However, \citep{aamari2019nonasymptotic} and  \citep{aizenbud2021non} mainly study the statistical properties of the local polynomial fitting algorithms, leaving several important computational issues, such as the algorithmic convergence properties untouched. Our paper addresses these computational issues by providing algorithmic convergence properties and a regularized QMF algorithm that potentially avoids over-fitting.

The rest of the paper proceeds as follows. In Section \ref{model}, we describe the alternating minimization algorithm for solving the QMF problem \eqref{equ:6}. As a key component, we propose an alternating minimization algorithm to solve the quadratic projection problem and present its theoretical analysis. We establish the algorithmic convergence property of QMF in Section \ref{sec:4}. In Section \ref{sec:3}, we develop a regularized QMF algorithm and discuss the tuning method.
Applications to manifold learning algorithms are given in Section \ref{application}, and numerical experiments are carried out in Section \ref{experiment}. We conclude this paper with several remarks in Section \ref{sec:7} and leave technical proofs in the Appendix.

\subsection{Notation}\label{sec:notation}

Throughout this paper, we denote by $\|\cdot\|$ and $\|\cdot\|_{\rF}$ the spectral norm and the Frobenius norm respectively. For a vector $v\in\mathbb{R}^D$, we use $\|v\|_1=\sum_{i=1}^D|v_i|$ to denote its $\ell_1$ norm. For a matrix $M\in\mathbb{R}^{r\times m}$, denote by $\sigma_i(M)$, $\sigma_{\min}(M)=\sigma_{\min\{r,m\}}(M)$, and $\|M\|_{2,1}=\sum_{i=1}^r\|M_{i\cdot}\|$ the $i$-th largest singular value, the smallest singular value, and the $\ell_{2,1}$ norm of $M$ respectively. Here $M_{i\cdot}$ is the $i$-th row of $M$. Also, denote by $M^\dagger$  the Moore-Penrose inverse of $M$ and by $P_M=M^T(MM^T)^{\dagger}M\in\mathbb{R}^{m\times m}$ the projection matrix corresponding to the row subspace of $M$, i.e., the subspace in $\mathbb{R}^m$ spanned by the rows of $M$. Let $\mathcal{B}\in\mathbb{R}^{D\times d\times d}$ be a tensor, then $\mathcal{B}(\tau,\eta)\in\mathbb{R}^D$ for any $\tau,\eta\in\mathbb{R}^d$. We use $B_k\in\mathbb{R}^{d\times d}$ to denote the $k$-th slice of $\mathcal{B}$, i.e., $\mathcal{B}(\tau,\eta)_k=\tau^TB_k\eta=\langle B_k,\eta\tau^T\rangle$ for all $\tau,\eta\in\mathbb{R}^d$ and $k=1,\ldots,D$. We refer to $\mathcal{B}$ as a symmetric tensor if $\{B_k\}_{k=1}^D$ are all symmetric. For any $\eta\in\mathbb{R}^d$, define ${\cal B}_\eta$ as the action of ${\cal B}$ on the vector $\eta$:
\begin{equation}
    {\cal B}_\eta=[B_1\eta,\ldots,B_D\eta]^T\in\mathbb{R}^{D\times d},\label{equ:24}
\end{equation}
where $B_k$ denotes the $k$-th slice of $\mathcal{B}$. 
Thus, $\mathcal{B}(\tau,\eta)={\cal B}_{\eta} \tau={\cal B}_\tau \eta$ when $\mathcal{B}$ is symmetric.
Denote by $\mathcal{B}^*(\cdot):\mathbb{R}^D\mapsto\mathbb{R}^{d\times d}$ the adjoint operator of $\mathcal{B}$:
\begin{equation*}
    \mathcal{B}^*(c)=\sum_{k=1}^Dc_kB_k\in\mathbb{R}^{d\times d},\quad\forall c\in\mathbb{R}^D,
\end{equation*}
where $B_k$ is the $k$-th slice of $\mathcal{B}$. Let ${\bf 1}_m=[1,\ldots,1]^T\in\mathbb{R}^m$ and ${\bf 0}$ be an all-zero matrix, whose size depends on the context. In addition, let $I_d$ be the identity matrix of size $d\times d$. Given two matrices $A,D\in\mathbb{R}^{d\times d}$, we use $A\succeq D$ (resp. $A\preceq D$) to indicate that $A-D$ (resp. $D-A$) is a positive semi-definite matrix. 

\section{Quadratic Matrix Factorization}\label{model}

This section presents an alternating minimization algorithm for solving quadratic matrix factorization. Given $\{x_i\}_{i=1}^m\subseteq\mathbb{R}^D$, the goal is to solve 
\begin{equation}\label{equ:5}
\min_{f\in{\cal F}, \{\tau_i\}_{i=1}^m\subseteq{\mathbb R}^d}\sum_{i=1}^m\|x_i-f(\tau_i)\|^2_{\rF},
\end{equation}
where ${\cal F}$ is the quadratic function class \eqref{equ:F}.
Before proceeding to the algorithm, let us first illustrate the advantages of the quadratic function class over the linear function class in a swiss roll fitting example. As in Figure~\ref{fig:1}, we generate noisy data points near a swiss roll, fit such data using linear and quadratic functions, and compare the fitted curves with the underlying truth.
It turns out that linear fitting tends to return a polygon, while quadratic fitting could produce curved lines that recover the underlying truth better. The difference between linear and quadratic fitting becomes more significant in the central region, where the curvature is large.
This coincides with the intuition that quadratic fitting performs better because it takes the curvature into consideration, while linear fitting does not.

\begin{figure}[t] 
   \centering
   \includegraphics[width=\linewidth]{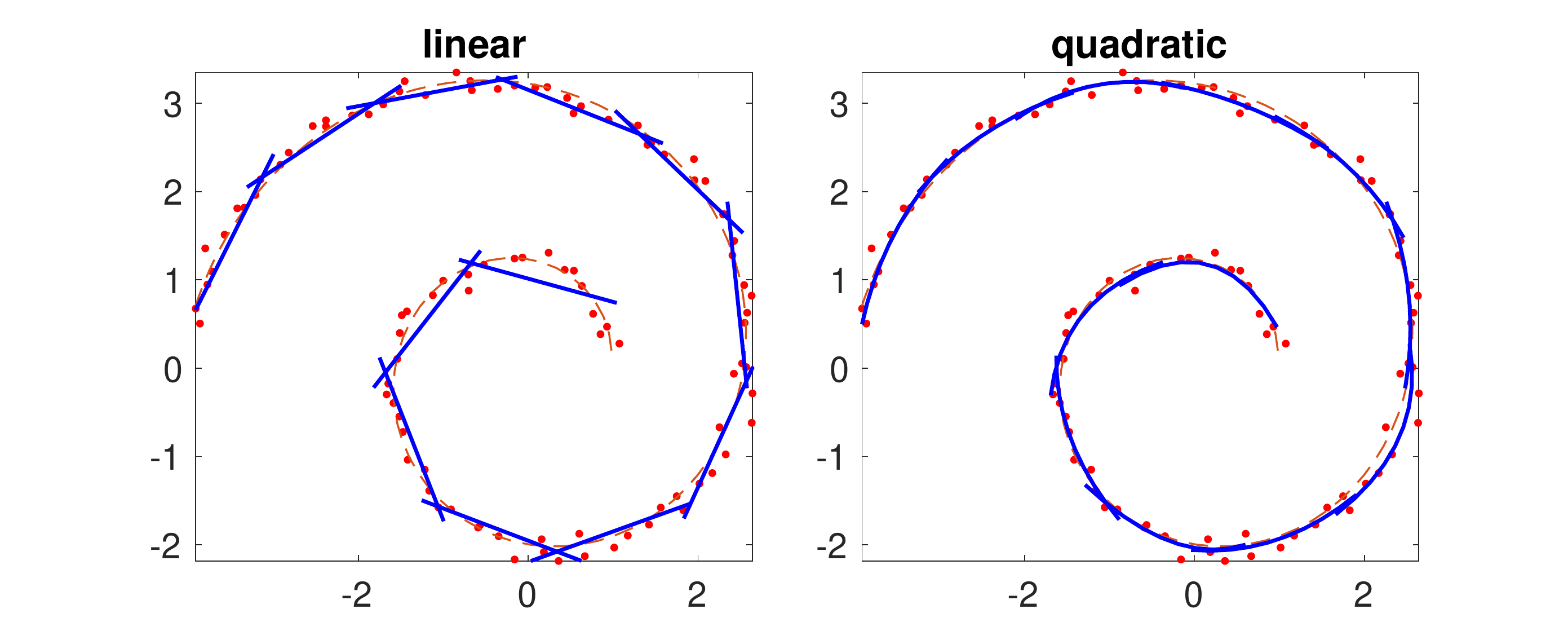} 
   \caption{A comparison between the linear and quadratic fitting in the swiss roll fitting problem. 
   The above figures display the fitted curves using linear and quadratic functions, respectively. The dots represent the raw data and the dashed lines represent the underlying truth.}
   \label{fig:1}
\end{figure}

To formally present the algorithm,
 we need some notations. Recall that $\psi(\cdot)$ and $\xi(\cdot)$ are given by \eqref{xi}.
For any symmetric tensor $\mathcal{B}\in\mathbb{R}^{D\times d\times d}$, 
there exists a unique matrix  $Q\in\mathbb{R}^{D\times (d^2+d)/2}$ such that
\begin{equation}
\mathcal{B}(\tau,\tau)=Q\psi(\tau),\quad\forall \tau\in\mathbb{R}^d.\label{BQ}
\end{equation}
In particular, the $k$-th row $Q_{k\cdot}$ of $Q$ relates to the $k$-th slice of $\mathcal{B}$ via the equality $Q_{k\cdot}\psi(\tau)=\tau^TB_k\tau$.
We shall refer to such $Q$ as the matrix representation of the symmetric tensor $\mathcal{B}$.
Similarly, for any quadratic function $f(\tau)=c+A\tau+{\cal B}(\tau,\tau)$, there exists a unique matrix $R\in\mathbb{R}^{D\times (2+3d+d^2)/2}$ such that $f(\tau)=R\xi(\tau),\forall \tau\in\mathbb{R}^d.$
Indeed, $R=[c, A, Q]$ with $Q$ being the matrix representation of the symmetric tensor $\cal B$.
Let $X=[x_1,\ldots,x_m]\in\mathbb{R}^{D\times m}$, $\Phi=[\tau_1,...,\tau_m]\in {\mathbb R}^{d\times m}$, and 
\begin{equation}\label{phi_psi}
\Psi(\Phi) = [\psi(\tau_1),...,\psi(\tau_m)]\in{\mathbb R}^{\frac{d^2+d}{2}\times m}.
\end{equation}
Then the optimization problem~\eqref{equ:5} is equivalent to 
\begin{align}\label{equ:9}
\min_{R,\Phi}\ell(R,\Phi)=\ell(c,A,Q,\Phi) = \|X - RT(\Phi)
\|_{\rF}^2,
\end{align}
where 
\begin{equation}\label{equ:10}
\begin{aligned}
&R=R(c,A,Q)=[c,A,Q],\\
&T(\Phi)=[\xi(\tau_1),\ldots,\xi(\tau_m)]
=\left[{\bf 1}_m, \Phi^T, \Psi^T(\Phi)
\right]^T.
\end{aligned}
\end{equation}
This can be viewed as a matrix factorization problem, with constraints given by \eqref{equ:10}. 
The last $(d^2+d)/2$ rows of $T(\Phi)$ in the constraints \eqref{equ:10} are determined by the second-to-$(1+d)$th rows of $T(\Phi)$ through the mapping $\psi$.
Recall that $\psi(\tau)$ collects all quadratic and interaction terms of $\{\tau_{[i]}\}_{i=1}^d$, thus these constraints specify all possible quadratic constraints. Problem \eqref{equ:9} is thus referred to as QMF. LMF is a special case of QMF with additional constraints that $Q=0$.

However,  \eqref{equ:9} suffers from non-identifiability issues. 
This is because the minima of \eqref{equ:9} are determined by $T(\Phi)$ only through its row space.  
To see this, we fix $\Phi$ and consider minimizing $\ell(R,\Phi)$ with respect to $R$ only. The minimizer $\widetilde R$ and the product $\widetilde RT(\Phi)$ are given by 
\begin{equation}\label{equ:13}
\begin{aligned}
   & \widetilde R=\argmin_{R}\ell(R,\Phi)=XT(\Phi)^T(T(\Phi)T(\Phi)^T)^{\dagger},\\
   & \widetilde RT(\Phi)=XT(\Phi)^T(T(\Phi)T(\Phi)^T)^{\dagger}T(\Phi)=XP_{T(\Phi)},
\end{aligned}
\end{equation}
where $M^\dagger$ denotes the Moore–Penrose inverse of $M$ and $P_{T(\Phi)}=T(\Phi)^T(T(\Phi)T(\Phi)^T)^{\dagger}T(\Phi)\in\mathbb{R}^{m\times m}$.  
Thus the loss of $\widetilde RT(\Phi)$ only depends on the row subspace of $T(\Phi)$.
Substituting \eqref{equ:13} into \eqref{equ:9}, we obtain
\begin{align*}
\min_{\Phi}\min_R\ell(R,\Phi)=\min_{\Phi}\|X-XP_{T(\Phi)}\|_{\rF}^2.
\end{align*}
In particular, if $\Phi_1$ and $\Phi_2$ satisfy $P_{T(\Phi_1)}=P_{T(\Phi_2)}$, then $\min_R\ell(R,\Phi_1)=\min_R\ell(R,\Phi_2)$ and thus it is impossible to distinguish $\Phi_1$ and $\Phi_2$ when optimizing $\eqref{equ:9}$. Proposition~\ref{prop:1} provides concrete transformations on $\Phi$ such that $\min_R\ell(R,\Phi)$ stays the same.

\begin{proposition}\label{prop:1}
Suppose $\Phi'=Z\Phi+u{\bf 1}_m^T$ for some invertible matrix $Z\in\mathbb{R}^{d\times d}$ and some vector $u\in\mathbb{R}^d$. Then for any $R$, there exists $R'$ such that
$\ell(R',\Phi') = \ell(R,\Phi)$. In particular, we have $\min_R\ell(R,\Phi')=\min_{R}\ell(R,\Phi)$.
\end{proposition}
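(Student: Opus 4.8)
The plan is to exploit the fact that the lifted feature map $\xi$ transforms linearly under affine reparametrizations of $\tau$. Concretely, I would first show that there exists a matrix $G\in\mathbb{R}^{N\times N}$ with $N=(2+3d+d^2)/2$, depending only on $Z$ and $u$ but not on $\tau$, such that $\xi(Z\tau+u)=G\,\xi(\tau)$ for every $\tau\in\mathbb{R}^d$. This holds because each coordinate of $\xi(Z\tau+u)$ --- namely the constant $1$, the affine coordinates $(Z\tau+u)_{[i]}$, and the quadratic products $(Z\tau+u)_{[i]}(Z\tau+u)_{[j]}$ --- is a polynomial in $\tau$ of degree at most two, and therefore equals a fixed linear combination of the entries $1$, $\tau_{[i]}$, and $\tau_{[i]}\tau_{[j]}$ that make up $\xi(\tau)$; collecting these coefficients row by row defines $G$.

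Next I would argue that $G$ is invertible. Applying the same construction to the inverse affine map $\tau\mapsto Z^{-1}(\tau-u)$, which is well defined since $Z$ is invertible, produces a matrix $G'$ with $\xi(Z^{-1}(\tau'-u))=G'\xi(\tau')$. Substituting $\tau'=Z\tau+u$ gives $\xi(\tau)=G'G\,\xi(\tau)$ for all $\tau$. Since the monomials $\{1\}\cup\{\tau_{[i]}\}\cup\{\tau_{[i]}\tau_{[j]}\}$ are linearly independent as functions of $\tau$, the vectors $\{\xi(\tau):\tau\in\mathbb{R}^d\}$ span $\mathbb{R}^N$, which forces $G'G=I_N$ and hence $G^{-1}=G'$.

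With $G$ in hand the conclusion is immediate. Writing $\tau_i'=Z\tau_i+u$, we obtain column by column $T(\Phi')=[\xi(\tau_1'),\ldots,\xi(\tau_m')]=[G\xi(\tau_1),\ldots,G\xi(\tau_m)]=G\,T(\Phi)$. Given any $R$, I would set $R'=RG^{-1}$, which is again an admissible parameter because $R=[c,A,Q]$ is unconstrained in $\mathbb{R}^{D\times N}$ (every $Q$ corresponds bijectively to a symmetric tensor, as established above). Then $R'T(\Phi')=RG^{-1}GT(\Phi)=RT(\Phi)$, so $\ell(R',\Phi')=\|X-R'T(\Phi')\|_{\rF}^2=\|X-RT(\Phi)\|_{\rF}^2=\ell(R,\Phi)$. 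For the final claim, since $R\mapsto RG^{-1}$ is a bijection of $\mathbb{R}^{D\times N}$, minimizing over $R'$ on the left and over $R$ on the right yields $\min_R\ell(R,\Phi')=\min_R\ell(R,\Phi)$. Note also that $T(\Phi')=GT(\Phi)$ with invertible $G$ means $\Phi'$ and $\Phi$ share the same row space, consistent with the identifiability discussion preceding the statement.

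I expect the only delicate point to be the invertibility of $G$ --- specifically the linear-independence argument showing that $\{\xi(\tau)\}$ spans the whole lifted space --- since everything else is bookkeeping. An alternative route avoiding $G$ entirely is to set $f'=f\circ g$ with $g(\tau')=Z^{-1}(\tau'-u)$ and verify directly that the composition of a quadratic map with an affine map is again quadratic, reading off $c',A',Q'$ from the expansion and using the symmetry of $\mathcal{B}$; this yields $f'(\tau_i')=f(\tau_i)$ and hence the same per-sample residuals, but it requires slightly more explicit tensor algebra than the $G$-matrix argument.
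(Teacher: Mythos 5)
Your proof is correct and takes the natural route the paper itself relies on: the lifted feature map $\xi$ intertwines affine reparametrizations of $\tau$ with an invertible linear map $G$ on the feature space, so $T(\Phi')=GT(\Phi)$ and $R'=RG^{-1}$ (admissible since $R$ is unconstrained in $\mathbb{R}^{D\times N}$) reproduces the same fit; the bijection $R\mapsto RG^{-1}$ then gives equality of the minima. The one genuinely delicate step, invertibility of $G$, is handled cleanly via the inverse affine map and the linear independence of the degree-$\le 2$ monomials.
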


The proof of Proposition~\ref{prop:1} is left in the Appendix. To overcome non-identifiability issues, we add additional  constraints $\Phi\Phi^T=I_d$ and $\Phi{\bf 1}_m=0$ to \eqref{equ:9} and solve
\begin{equation}\label{equ:16}
\min_{R,\Phi \atop \Phi\Phi^T = I_d,\Phi {\bf 1}_m=0}\ell(R,\Phi) = \|X - R
T(\Phi)
\|_{\rF}^2,
\end{equation}
where 
$R=R(c,A,Q)$ and $T(\Phi)$ are defined in \eqref{equ:10}. Several remarks follow. First, by Proposition \ref{prop:1}, the minima of the constrained problem \eqref{equ:16} and the unconstrained problem \eqref{equ:9} are the same. Thus these two problems are equivalent. Second, by introducing constraints $\Phi\Phi^T=I_d$, we reduce the solution space from $\mathbb{R}^{d\times m}$ to the Stiefel manifold, which potentially helps speed up the optimization procedure \citep{edelman1998geometry}.
Third, we still do not have any restrictions on $R$ in \eqref{equ:16}, so optimizing \eqref{equ:16} over $R$ with fixed $\Phi$ is still a regression problem with its solution given by~\eqref{equ:13}.
Fourth, the constraint $\Phi\Phi^T=I_d$ enforces the scale of $\Phi$ to be neither too large nor too small. Thus, the configuration of the approximation $RT(\Phi)$ largely depends on $R$ and can be controlled by regularizing $R$ properly. We will explore this last point in Section \ref{sec:3}. 

\begin{algorithm}[t]
\caption{An alternating minimization algorithm for quadratic matrix factorization.}
\label{alg:1}
\KwData{$X=[x_1,\ldots,x_m]\in\mathbb{R}^{D\times m}$}
\KwResult{$\Phi=[\tau_1,\ldots,\tau_m]\in\mathbb{R}^{d\times m}$, quadratic function $f(\tau)= R_t\xi(\tau)$.}

{
initialize $\Phi_0\in\mathbb{R}^{d\times m}$ as the top $d$ eigenvectors of $G=(X-\bar x{\bf 1}_m^T)^T(X-\bar x{\bf 1}_m^T)$\;
\While{$\|\Phi_{t}^T \Phi_{t}-\Phi_{t-1}^T\Phi_{t-1}\|>\epsilon$}
{update $R_{t} = \argmin_R \ell(R, \Phi_{t-1})$ as in \eqref{equ:13}\;
{\For{$i=1$ \KwTo $m$}{solve the $i$-th projection problem $\widetilde\tau_{i,t}=\argmin_{\tau\in\mathbb{R}^{d}}\|x_i-R_t\xi(\tau)\|^2$\;}}
set $\widetilde\Phi_t=[\widetilde\tau_{1,t},\ldots,\widetilde\tau_{m,t}]$\;
update $\Phi_t=Z_t\widetilde\Phi_t(I_m-{\bf 1}_m{\bf 1}_m^T/m)$ with $Z_t$ given by \eqref{equ:19}\;
}
}
\end{algorithm}

Now we present our first main algorithm. We adopt an alternating minimization strategy to solve \eqref{equ:16}. To begin with, we initialize $\Phi_0\in\mathbb{R}^{d\times m}$ as the top $d$ eigenvectors of the gram matrix $G=(X-\bar x{\bf 1}_m^T)^T(X-\bar x{\bf 1}_m^T)$, where $\bar x=\frac{1}{m}\sum_{i=1}^mx_i$. During the $t$-th loop, we first fix $\Phi=\Phi_{t-1}$ and update $R_{t} = \argmin_R \ell(R, \Phi_{t-1})$ as in \eqref{equ:13}. Next, we fix $R=R_t$ and update $\widetilde\Phi_t=\argmin_{\Phi}\ell(R_t,\Phi)$. 
This is a separable problem in the sense that $\widetilde \Phi_t$  is given by  $\widetilde\Phi_t=[\widetilde\tau_{1,t},\ldots,\widetilde\tau_{m,t}]$ with
\begin{equation}
\widetilde\tau_{i,t}=\argmin_{\tau\in\mathbb{R}^d}\|x_i-R_t\xi(\tau)\|^2,\label{equ:18}
\end{equation}
where $\xi(\tau)$ is defined in \eqref{xi}.
We refer to \eqref{equ:18} as a quadratic projection problem because 
it finds the closest point $R_t\xi(\widetilde \tau_{i,t})$ to $x_i$ on the quadratic surface $f_t(\tau)=R_t\xi(\tau)$.
At the end of the $t$-th loop, we set $\Phi_t=Z_t\widetilde\Phi_t(I_m-{\bf 1}_m{\bf 1}_m^T/m)$ with $Z_t$ given by
\begin{equation}
    Z_t=(\widetilde\Phi_t(I_m-{\bf 1}_m^T{\bf 1}_m/m)\widetilde\Phi_t^T)^{-1/2}.\label{equ:19}
\end{equation}
This ensures that  the constraints $\Phi_t\Phi_t^T=I_d$ and $\Phi_t{\bf 1}_m=0$ hold. Given a precision level $\epsilon>0$, we will repeat the above iterations until the stopping criteria $\|\Phi_t^T\Phi_t-\Phi_{t-1}^T\Phi_{t-1}\|\leq\epsilon$ is met, i.e., the row subspace of $\Phi_t$ stabilizes.
Algorithm~\ref{alg:1} summarizes the details. Till now, the only issue we have not yet addressed is how to solve \eqref{equ:18}, which will be discussed in the following subsection.

\subsection{Quadratic Projection}\label{sec:2.2}
Since the parameters $R_t$ and $x_i$ are fixed when solving \eqref{equ:18},
we omit the subscripts $i$ and $t$ for simplicity. We rewrite the loss function in \eqref{equ:18} as
\begin{equation}\label{h}
    h(\tau)=\|x-c-A\tau-\mathcal{B}(\tau,\tau)\|^2,
\end{equation}
where $c,A,{\cal B}$ are determined by $R$ via \eqref{BQ} and \eqref{equ:10}.
Minimizing \eqref{h} with respect to $\tau$ is non-convex,
and we consider minimizing the following  surrogate loss instead:
\begin{gather*}
\min_{\tau,\eta}g(\tau, \eta)=  \frac{1}{2}\|x-c-A\tau -{\cal B}(\tau,\eta)\|^2 \\ + \frac{1}{2}\|x-c-A\eta -{\cal B}(\tau,\eta)\|^2.
\end{gather*}
Note that $g$ is symmetric in $\tau$ and $\eta$, and $g(\tau,\tau)=h(\tau)$.
Starting from $\tau_0=\eta_0={\bf 0}\in\mathbb{R}^d$, we update $\{\tau_s,\eta_s\}$ iteratively in the following manner:
\begin{equation}
\left\{
\begin{array}{l}
\tau_{s} = \argmin_{\tau} g(\tau, \eta_{s-1}),\\
\eta_{s} = \argmin_{\eta} g(\tau_{s}, \eta). 
\end{array}
\right. \label{equ:21}
\end{equation}
Upon convergence such that $(\tau^*,\eta^*)=\argmin_{\tau,\eta}g(\tau,\eta)$, we take $\tau^*$ to be the solution to \eqref{equ:18}.
In what follows, we show how \eqref{equ:21} can be efficiently solved  and prove that $(\tau_s,\eta_s)$ converges to $(\tau^*,\tau^*)$ for some stationary point $\tau^*$ of $h$ under certain conditions.


The update rule \eqref{equ:21} can be implemented efficiently as all iterates admit closed-form solutions. Let us fix $\eta=\eta_{s-1}$ and consider optimizing $g(\tau,\eta_{s-1})$ over $\tau$. 
Recall ${\cal B}_{\eta}$ is given by \eqref{equ:24} for any $\eta\in\mathbb{R}^{d}$
and  $\mathcal{B}(\tau,\eta)={\cal B}_\tau \eta={\cal B}_\eta \tau$. Then $g(\tau,\eta_{s-1})$ can be rewritten as
\begin{gather*}
    g(\tau,\eta_{s-1})=\frac{1}{2}\|x-c-(A+{\cal B}_{\eta_{s-1}})\tau\|^2\\
    +\frac{1}{2}\|x-c-A\eta_{s-1}-{\cal B}_{\eta_{s-1}}\tau\|^2.
\end{gather*}
This is a quadratic function of $\tau$, so $\tau_s=\argmin_{\tau}g(\tau,\eta_{s-1})$ admits a closed-form solution
\begin{equation*}
    \tau_s=\Gamma_{\eta_{s-1}}^{-1}\zeta_{\eta_{s-1}},
\end{equation*}
where $\Gamma_{\eta_{s-1}}$ and $\zeta_{\eta_{s-1}}$ are
\begin{align}
    \Gamma_{\eta_{s-1}}&=(A+{\cal B}_{\eta_{s-1}})^T(A+{\cal B}_{\eta_{s-1}})+{\cal B}_{\eta_{s-1}}^T{\cal B}_{\eta_{s-1}},\label{equ:25}\\
    \zeta_{\eta_{s-1}}&=(A+{\cal B}_{\eta_{s-1}})^T(x-c)+{\cal B}_{\eta_{s-1}}^T(x-c-A\eta_{s-1}).\label{equ:26}
\end{align}
Since $g(\tau,\eta)$ is symmetric in $\tau$ and $\eta$, the dual problem $\eta_s=\argmin_{\eta}g(\tau_s,\eta)$ can be solved similarly as
\begin{equation*}
    \eta_s=\Gamma_{\tau_s}^{-1}\zeta_{\tau_s},
\end{equation*}
where $\Gamma_{\tau_s}$ and $\zeta_{\tau_s}$ are given by \eqref{equ:25} and \eqref{equ:26} with $\eta_{s-1}$ replaced by $\tau_s$.


Now we provide conditions under which $(\tau_s,\eta_s)$ converges such that $\tau^*=\lim_{s}\tau_s=\lim_{s}\eta_s$, and $(\tau^*,\tau^*)$ is a first-order stationary point of $g$.  This implies that $\tau^*$ is a stationary point of $h$.
The key is to analyze the Hessian matrix of $g$:
\begin{equation*}
    H_g(\tau,\eta)=
    \begin{bmatrix}
    \Gamma_\eta & H_{\eta\tau}\\
    H_{\tau\eta} & \Gamma_\tau
    \end{bmatrix},
\end{equation*}
where $\Gamma_\eta$  and $\Gamma_\tau$ are given by \eqref{equ:25} with parameter $\eta$ and $\tau$ respectively and $H_{\eta\tau}$ and $H_{\tau\eta}$ are given by 
\begin{equation}
\begin{aligned}
    H_{\eta\tau}=H_{\eta\tau}^T= &-2\mathcal{B}^*(x-c-A(\tau+\eta)/2-\mathcal{B}(\tau,\eta))
    \\
    &+{\cal B}_\eta^T(A+{\cal B}_\tau)+(A+{\cal B}_\eta)^T {\cal B}_\tau.\label{equ:32}
    \end{aligned}
\end{equation}
Here $B_\eta$ and $B_\tau$ are defined in \eqref{equ:24}, and $\mathcal{B}^*(\cdot)$ represents the adjoint operator of $\mathcal{B}$. 
$\Gamma_\eta$ is always positive definite when $\sigma_{d}(A)>0$ because
\begin{equation}
    \Gamma_\eta=\frac{1}{2}A^TA+(\frac{1}{\sqrt{2}}A+\sqrt{2}{\cal B}_\eta)^T(\frac{1}{\sqrt{2}}A+\sqrt{2}{\cal B}_\eta)\succeq \frac{1}{2}A^TA.\label{equ:34}
\end{equation}
The following theorem shows that if $(\tau_s,\eta_s)$ is bounded, then under certain conditions on $\mathcal{B}$, we have $(\tau_s,\eta_s)$ converges, $\tau^*=\lim_{s\to\infty}\tau_s=\lim_{s\to\infty}\eta_s$, and $(\tau^*,\tau^*)$ is a stationary point of $g$.

\begin{theorem}\label{thm:2}
    Suppose $\sigma_{d}(A)>0$ and define $ \mathcal{S}_{\alpha}=\{\tau\in\mathbb{R}^d\mid\|\tau\|\leq\alpha\}$ for some $\alpha>0$. Denote by $B_k$ the $k$-th slice of $\mathcal{B}$ for $k=1,\ldots,D$. If $\mathfrak{b}=\max_{k}\sigma_1(B_k)$ satisfies 
    \#
    (2\|x-c\|_1+4\alpha\|A\|_{2,1})\mathfrak{b}+3D\alpha^2\mathfrak{b}^2\leq \sigma^2_d(A)/4,\label{equ:34b}
    \#
    then $H_g(\tau,\eta)$ is positive definite and $\sigma_{\min}(H_g(\tau,\eta))\geq\sigma_d^2(A)/4$  for all $\tau,\eta\in\mathcal{S}_\alpha$. 
    If the sequence $(\tau_s,\eta_s)$ obtained by \eqref{equ:21} falls into the region $\mathcal{S}_\alpha\times \mathcal{S}_\alpha$ for sufficiently large $s$, then $(\tau_s,\eta_s)$ converges, $\tau^*=\lim_{s\to\infty}\tau_s=\lim_{s\to\infty}\eta_s$, and $(\tau^*,\tau^*)$ is a stationary point of $g$.
\end{theorem}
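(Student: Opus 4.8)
The plan is to prove the two assertions separately: first establish the Hessian estimate, which is the computational core, and then feed its strong-convexity consequence into a standard block-minimization convergence argument.

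For the positive-definiteness claim I would exploit the $2\times 2$ block structure of $H_g(\tau,\eta)$. The diagonal blocks are handled for free by the decomposition \eqref{equ:34}, which gives $\Gamma_\eta,\Gamma_\tau\succeq \tfrac12 A^TA$ and hence $\sigma_{\min}(\Gamma_\eta),\sigma_{\min}(\Gamma_\tau)\geq \tfrac12\sigma_d^2(A)$. The real work is bounding the off-diagonal block $H_{\eta\tau}$ in spectral norm on $\mathcal{S}_\alpha$. Using \eqref{equ:32} and the triangle inequality I would estimate the three constituents separately: for the adjoint term, the inequality $\|\mathcal{B}^*(v)\|\leq\mathfrak{b}\|v\|_1$ together with $\|A\zeta\|_1\leq\|A\|_{2,1}\|\zeta\|$ and $\|\mathcal{B}(\tau,\eta)\|_1\leq D\alpha^2\mathfrak{b}$; for $\mathcal{B}_\eta^TA$ and $A^T\mathcal{B}_\tau$, the row-wise estimate $\|\mathcal{B}_\eta^TA\|\leq\sum_k\|B_k\eta\|\,\|A_{k\cdot}\|\leq\alpha\mathfrak{b}\|A\|_{2,1}$; and for the bilinear cross terms $\mathcal{B}_\eta^T\mathcal{B}_\tau$, the estimate $\sum_k\|B_k\eta\|\,\|B_k\tau\|\leq D\alpha^2\mathfrak{b}^2$. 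Assembling these produces a bound of the form $(2\|x-c\|_1+4\alpha\|A\|_{2,1})\mathfrak{b}+cD\alpha^2\mathfrak{b}^2$, so that hypothesis \eqref{equ:34b} forces $\|H_{\eta\tau}\|\leq\sigma_d^2(A)/4$. I would then invoke the elementary block criterion: with diagonal blocks $\succeq mI$ for $m=\tfrac12\sigma_d^2(A)$ and $\|H_{\eta\tau}\|\leq b=\tfrac14\sigma_d^2(A)$, the estimate $2|u^TH_{\eta\tau}v|\leq b(\|u\|^2+\|v\|^2)$ yields $u^T\Gamma_\eta u+2u^TH_{\eta\tau}v+v^T\Gamma_\tau v\geq (m-b)(\|u\|^2+\|v\|^2)$, hence $\sigma_{\min}(H_g)\geq m-b=\sigma_d^2(A)/4$.

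For the convergence claim I would use strong convexity in two layers. Globally, \eqref{equ:34} makes each block subproblem $\tfrac12\sigma_d^2(A)$-strongly convex, so the exact minimizing steps \eqref{equ:21} give the descent inequalities $g(\tau_{s-1},\eta_{s-1})-g(\tau_s,\eta_s)\geq \tfrac14\sigma_d^2(A)(\|\tau_s-\tau_{s-1}\|^2+\|\eta_s-\eta_{s-1}\|^2)$; since $g\geq 0$ is monotonically decreasing it converges, the right-hand side is summable, and therefore $\|\tau_s-\tau_{s-1}\|\to0$ and $\|\eta_s-\eta_{s-1}\|\to0$. As the tail of the sequence lies in the compact set $\mathcal{S}_\alpha\times\mathcal{S}_\alpha$, it has limit points, and passing to the limit in the closed-form updates $\tau_s=\Gamma_{\eta_{s-1}}^{-1}\zeta_{\eta_{s-1}}$ and $\eta_s=\Gamma_{\tau_s}^{-1}\zeta_{\tau_s}$ (using the vanishing increments and continuity/invertibility of $\Gamma$) shows that every limit point is a stationary point of $g$. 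Here the first claim re-enters: on the convex region $\mathcal{S}_\alpha\times\mathcal{S}_\alpha$ the Hessian is $\succeq\tfrac14\sigma_d^2(A)I$, so gradient monotonicity allows at most one stationary point there; hence all limit points coincide and the bounded sequence converges to a single $(\tau^*,\eta^*)$. Finally, symmetry $g(\tau,\eta)=g(\eta,\tau)$ makes $(\eta^*,\tau^*)$ a stationary point in the region as well, so uniqueness gives $\tau^*=\eta^*$, and $(\tau^*,\tau^*)$ is the desired stationary point of $g$.

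I expect the main obstacle to be the off-diagonal estimate in the first part: obtaining $\|H_{\eta\tau}\|\leq\sigma_d^2(A)/4$ with the precise dimensional constant in \eqref{equ:34b} requires pairing the right norm with each term --- $\ell_1$ norms against $\mathfrak{b}=\max_k\sigma_1(B_k)$ for the adjoint contribution and the mixed $\ell_{2,1}$/operator estimates for the bilinear contributions --- rather than crude operator-norm bounds, which would lose the sharp dependence on $D$ and $\alpha$. By comparison, the convergence argument is routine once strong convexity is in hand, the only mild subtlety being the symmetry trick that identifies the two limits.
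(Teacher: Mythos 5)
Your proposal is correct and follows essentially the same route as the paper: the same block decomposition of $H_g$ with the diagonal blocks controlled by \eqref{equ:34}, the same term-by-term spectral bound on $H_{\eta\tau}$ via $\|\mathcal{B}^*(v)\|\leq\mathfrak{b}\|v\|_1$ and the $\ell_{2,1}$/rank-one estimates yielding exactly the constants in \eqref{equ:34b}, and the same symmetry-plus-uniqueness argument to force $\tau^*=\eta^*$. The only difference is that you spell out the descent/summability argument for convergence of the alternating scheme where the paper simply invokes the standard result for exact block minimization of a smooth, strongly convex function.
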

\begin{proof}
    Since $\sigma_{d}(A)>0$, by \eqref{equ:34}, $\Gamma_\eta$ is positive definite with $\sigma_{\min}(\Gamma_\eta)\geq\sigma_{d}^2(A)/2$. By symmetry, such property holds for $\Gamma_\tau$ as well.
    As for $H_{\eta\tau}$ and $H_{\tau\eta}$, 
    we can use \eqref{equ:34b} to show that
    \begin{equation}     \sigma_1(H_{\eta\tau})=\sigma_1(H_{\tau\eta})\leq\sigma_{d}^2(A)/4,\label{equ:35}
    \end{equation}
    holds for any $\tau,\eta\in\mathcal{S}_\alpha$. In particular, for any $\tau,\eta\in{\cal S}_{\alpha}$, we have
   \[
    \begin{array}{ll}
    \sigma_1({\cal B}^*({\cal B}(\tau,\eta)))\leq  D\alpha^2 \mathfrak{b}^2,
    &\sigma_1({\cal B}_{\eta}^TA)\leq  \alpha\|A\|_{2,1}\mathfrak{b},\\
    \sigma_1(A^T{\cal B}_{\tau})\leq \alpha\|A\|_{2,1}\mathfrak{b}, 
    &\sigma_1({\cal B}_{\eta}^T{\cal B}_{\tau})\leq  D\alpha^2\mathfrak{b}^2,
    \end{array}
    \]
    Similarly,
    \[
    \begin{aligned}
    &\sigma_1({\cal B}^*(x-c-A(\tau+\eta)/2)) \\
    \leq & (\|x-c\|_1+\|A(\tau+\eta)/2\|_1)\mathfrak{b} \leq  (\|x-c\|_1+\alpha\|A\|_{2,1})\mathfrak{b}.
    \end{aligned}
    \]
    Combining these inequalities with  \eqref{equ:32}, we obtain
    \$
    \begin{aligned}
    \sigma_1(H_{\eta\tau})&=\sigma_1(H_{\tau\eta})\\
    &\leq (2\|x-c\|_1+4\alpha\|A\|_{2,1})\mathfrak{b}+3D\alpha^2\mathfrak{b}^2.
    \end{aligned}
    \$
    Then \eqref{equ:35} follows from the condition \eqref{equ:34b}.
    To proceed, we decompose $H_g(\tau,\eta)$ into the following summation of a positive definite matrix and a symmetric matrix:
    \begin{equation}
        H_g(\tau,\eta)=
        \begin{bmatrix}
        \Gamma_\eta & {\bf 0}\\
        {\bf 0} & \Gamma_\tau
        \end{bmatrix}+
        \begin{bmatrix}
        {\bf 0} & H_{\eta\tau}\\
        H_{\tau\eta} & {\bf 0}
        \end{bmatrix}.\label{equ:36}
    \end{equation}
    For the second term, it follows from \eqref{equ:35} that
    \begin{equation*}
        \sigma_1\left(\begin{bmatrix}
        {\bf 0} & H_{\eta\tau}\\
        H_{\tau\eta} & {\bf 0}
        \end{bmatrix}\right)= \sigma_1(H_{\eta\tau})\leq\sigma_d^2(A)/4,
    \end{equation*}
    for any $\tau,\eta\in\mathcal{S}_\alpha$. Since  $\min\{\sigma_{\min}(\Gamma_\tau),\sigma_{\min}(\Gamma_{\eta})\}\geq\sigma_d^2(A)/2$, the first term in \eqref{equ:36} is positive definite, $H_g(\tau,\eta)$ is positive definite, and
    \begin{equation}
    \begin{aligned}
        \sigma_{\min}(H_g(\tau,\eta))\geq&\min\{\sigma_{\min}(\Gamma_\tau),\sigma_{\min}(\Gamma_{\eta}\}-\sigma_1(H_{\eta\tau})\\
        \geq&\sigma_d^2(A)/4, \qquad \forall \tau,\eta\in\mathcal{S}_\alpha.\label{equ:38}
        \end{aligned}
    \end{equation}
    Since the region $\mathcal{S}_\alpha\times \mathcal{S}_\alpha$ is convex, \eqref{equ:38} implies the strong convexity of $g$ over $\mathcal{S}_\alpha\times \mathcal{S}_\alpha$.
    
Suppose that the sequence $(\tau_s,\eta_s)$ generated by \eqref{equ:21} falls into the region $\mathcal{S}_\alpha\times \mathcal{S}_\alpha$ for sufficiently large $s$. Since $g$ is smooth and strongly convex over the region $\mathcal{S}_\alpha\times \mathcal{S}_\alpha$, it is well-known that $(\tau_s,\eta_s)$ would converge to the unique first-order stationary point of $g$ in $\mathcal{S}_\alpha\times \mathcal{S}_\alpha$ \citep{beck2017first}. Next, we will prove $\lim_{s\to\infty}\tau_s=\lim_{s\to\infty}\eta_s$ by contradiction. 
    In specific, define $\tau^*=\lim_{s\to\infty}\tau_s$ and $\eta^*=\lim_{s\to\infty}\eta_s$ and assume $\tau^*\neq\eta^*$.
    Since $(\tau^*,\eta^*)$ is a first-order stationary point of $g$ in $\mathcal{S}_{\alpha}\times\mathcal{S}_{\alpha}$, by symmetry, we know $(\eta^*,\tau^*)$ is also a first-order stationary point of $g$ in $\mathcal{S}_{\alpha}\times\mathcal{S}_{\alpha}$.
    However, this contradicts the uniqueness of the first-order stationary point of $g$ in  $\mathcal{S}_{\alpha}\times\mathcal{S}_{\alpha}$.
\end{proof}
\begin{figure}[t] 
   \centering
   \includegraphics[width=\linewidth]{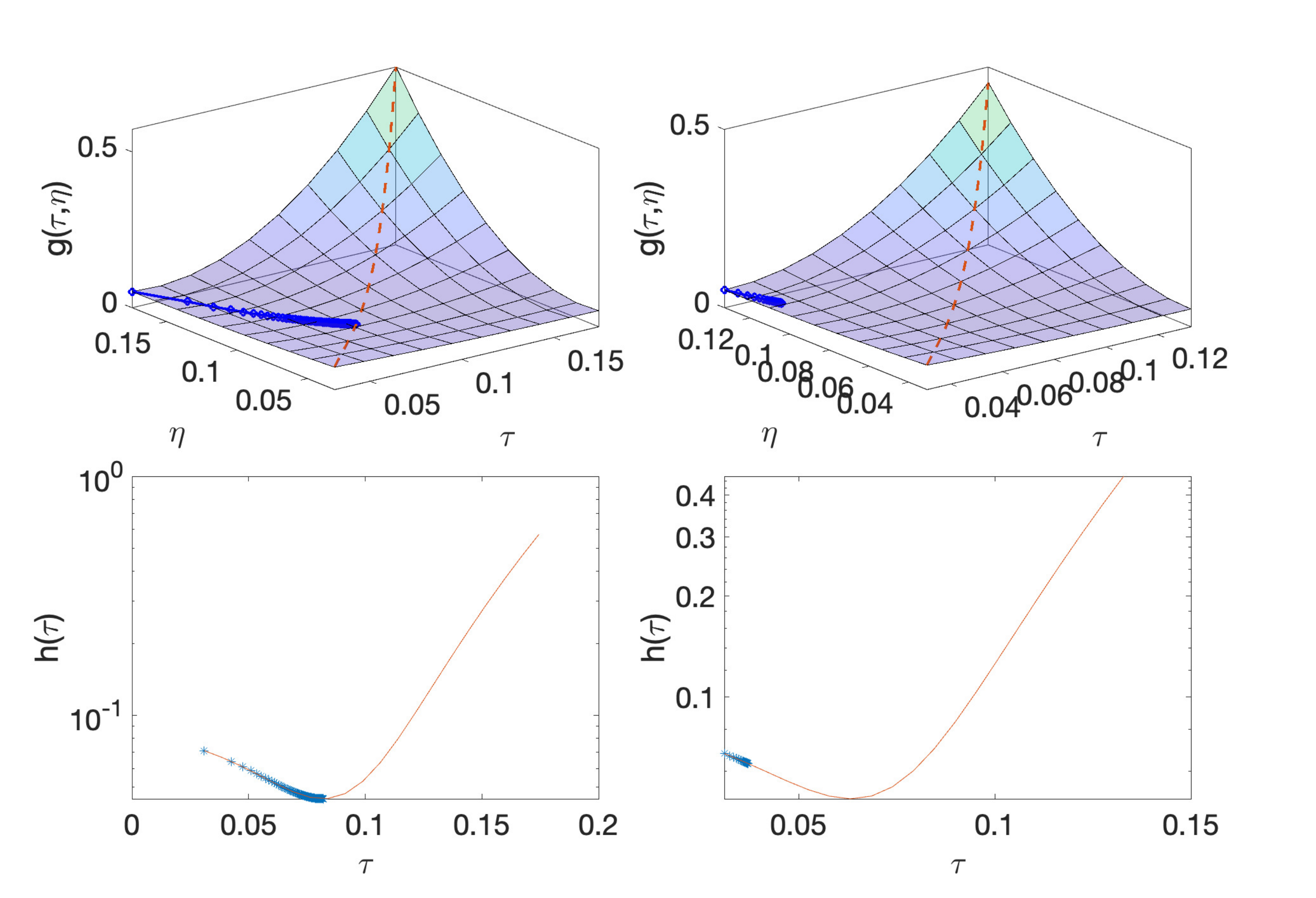} 
   \caption{
   {
   An illustration of the effect of $\mathfrak{b}$ on the 
   convergence 
   behaviors 
   of 
   $\{(\tau_s,\eta_s)\}_{s=1}^{\infty}$ generated by \eqref{equ:21}.} The left column corresponds to the setting $(x,c,A,20{\cal B})$ and the right column corresponds to the setting $(x,c,A,30{\cal B})$, where the parameters are set as follows: $A = [ -0.8979, 1.0086, -0.5422]^T,{\cal B} = [0.7817, -1.4908, -0.3679], c =[ 0.4171, 0.9176, 0.1759]^T$, and $x = [0.2561, 0.7500, 0.0099]^T$.
   {
   The first row displays $g(\tau_s,\eta_s)$ with the background surface representing the function $g$. The dotted line stands for the case $\tau=\eta$.
   The second row displays $h(\tau_s)$ under the two settings.
   }
   }
   \label{fig:D1}
 \end{figure}

Suppose $(\tau_s,\eta_s)\subseteq{\cal S}_{\alpha}\times {\cal S}_{\alpha}$ for sufficiently large $s$. Theorem~\ref{thm:2} proves the convergence of $(\tau_s,\eta_s)$ by establishing the strong convexity of $g$ over ${\cal S}_{\alpha}\times {\cal S}_{\alpha}$ under condition \eqref{equ:34b}.
Condition \eqref{equ:34b} holds when {both} $\alpha$ and $\|x-c\|_1$ are small or  $\mathfrak{b}=\max_k\sigma_1(B_k)$ is small. In particular, condition \eqref{equ:34b} holds if $\mathfrak{b}\leq\mathfrak{b}_0$, where
\begin{align*}
\mathfrak{b}_0=&\frac{-\|x-c\|_1-2\alpha\|A\|_{2,1}}{3D\alpha^2}\\
&+\frac{\sqrt{(\|x-c\|_1+2\alpha\|A\|_{2,1})^2+3D\alpha^2\sigma_d^2(A)/4}}{3D\alpha^2}.
\end{align*}
It trivially holds {when}
$\mathfrak{b}=0$, which corresponds to LMF. In general, by the relationship \eqref{BQ} between $\mathcal{B}$ and $Q$, we have $\sigma_1(B_k)\leq\|B_k\|_{\rF}\leq\|Q_{k\cdot}\|$, where $Q_{k\cdot}$ is the $k$-th row of $Q$. Thus, $\mathfrak{b}$ is small if $\|Q\|_{\rF}$ is small. 

{
To conclude this subsection,
we use an example with $D=3,d=1$ to illustrate the effect of $\mathfrak{b}$ on the convergence of $\{(\tau_s,\eta_s)\}_{s=1}^{\infty}$.
We consider two different settings of $g$ and display the sequence $\{(\tau_s,\eta_s)\}_{s=1}^{\infty}$ in Figure \ref{fig:D1}. The left panel has a smaller $\mathfrak{b}$ than the right panel. The first row of Figure \ref{fig:D1} shows that $(\tau_s,\eta_s)$ converges with $\lim\tau_s=\lim\eta_s$ in the left panel while $(\tau_s,\eta_s)$ converges with $\lim\tau_s\neq\lim\eta_s$ in the right panel. In addition, the second row shows that $\tau_s$ in the left panel converges to the global minimum of $h$  while $\tau_s$ in the right panel does not.
}
{ 
The differences between these two cases lie in the different values of $\mathfrak{b}$. In particular, if $\mathfrak{b}$ is too large as in the right panel,  $(\tau_s,\eta_s)$ would not converge to the same limiting point,  that is $\lim \tau_s\neq\lim \eta_s$. In this case,  $\eta_s$ is not guaranteed to converge to the global minimum of $h$ in general.}


\section{Convergence Properties}\label{sec:4}

This section establishes the convergence property for Algorithm \ref{alg:1}. In particular, we provide conditions under which a limit point of the update sequence is a stationary point of \eqref{equ:9}. 

\begin{theorem}\label{thm:5}
Denote by $\{(R_t,\widetilde\Phi_t,\Phi_t)\}_{t=1}^{\infty}$ the update sequence generated by Algorithm \ref{alg:1}. For any sub-sequence $\{t_j\}_{j=1}^{\infty}$ such that $\lim_{j}\Phi_{t_j}=\Phi^*$, $\lim_{j}\Phi_{t_{j}-1}=\Phi^{**}$, $\lim_{j}\widetilde\Phi_{t_{j}}=\widetilde\Phi^{**}$, if $\sigma_{\min}(T(\Phi^*)T(\Phi^*)^T)>0$ and $\sigma_{\min}(T(\Phi^{**})T(\Phi^{**})^T)>0$, then $R_{t_j+1}$ and $R_{t_j}$ also converge. Denote by $R^*=\lim_j R_{t_j+1}$ and $R^{**}=\lim_jR_{t_j}$. Also, we have $\widetilde\Phi^{**}=\argmin_{\Phi}\ell(R^{**},\Phi)$. If we assume
\begin{equation}
    \ell(R^{**},\Phi^{**})-\ell(R^{**},\widetilde\Phi^{**})\geq\gamma\|\Phi^{**}-\widetilde\Phi^{**}\|_{\rF}^2\label{equ:50}
\end{equation}
holds for some constant $\gamma>0$,
then $\Phi^{**}=\widetilde\Phi^{**}=\Phi^{*}$ and $R^{**}=R^*$ and the accumulation point $(R^*,\Phi^*)$ satisfies the first order Karush-Kuhn-Tucker (KKT) condition of the problem \eqref{equ:9}.
\end{theorem}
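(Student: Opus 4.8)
The plan is to combine three ingredients: continuity of the closed-form regression solution \eqref{equ:13}, a monotone descent property of the objective value along the iterations, and the regularity condition \eqref{equ:50} that rules out the two alternating blocks settling at different points.

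First I would establish the convergence of the $R$-iterates. Since $R_{t} = X T(\Phi_{t-1})^T (T(\Phi_{t-1})T(\Phi_{t-1})^T)^{\dagger}$ by \eqref{equ:13}, and $T(\cdot)$ is a continuous (polynomial) map of its argument, the invertibility assumptions $\sigma_{\min}(T(\Phi^{**})T(\Phi^{**})^T)>0$ and $\sigma_{\min}(T(\Phi^*)T(\Phi^*)^T)>0$ guarantee that the pseudo-inverse coincides with the ordinary inverse in a neighborhood of the limits and is therefore continuous there. Passing to the limit along $\{t_j\}$ then yields $R_{t_j}\to R^{**} = X T(\Phi^{**})^T (T(\Phi^{**})T(\Phi^{**})^T)^{-1}$ and $R_{t_j+1}\to R^* = X T(\Phi^*)^T (T(\Phi^*)T(\Phi^*)^T)^{-1}$. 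To obtain $\widetilde\Phi^{**}=\argmin_\Phi \ell(R^{**},\Phi)$, I would use the standard closedness of argmin under parametric convergence: for every $\Phi$, optimality of $\widetilde\Phi_{t_j}$ gives $\ell(R_{t_j},\widetilde\Phi_{t_j})\le \ell(R_{t_j},\Phi)$, and letting $j\to\infty$ with the joint continuity of $\ell$ together with $R_{t_j}\to R^{**}$ and $\widetilde\Phi_{t_j}\to\widetilde\Phi^{**}$ yields $\ell(R^{**},\widetilde\Phi^{**})\le \ell(R^{**},\Phi)$ for all $\Phi$.

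Next I would exploit the monotonicity of $a_t := \ell(R_t,\Phi_{t-1})$. Each loop decreases the objective: the $\Phi$-step gives $\ell(R_t,\widetilde\Phi_t)\le \ell(R_t,\Phi_{t-1})=a_t$; the centering-whitening step $\Phi_t = Z_t\widetilde\Phi_t(I_m-{\bf 1}_m{\bf 1}_m^T/m)$ is of the form $Z_t\widetilde\Phi_t+u{\bf 1}_m^T$ with $Z_t$ invertible, so Proposition \ref{prop:1} gives $\min_R \ell(R,\Phi_t)=\min_R\ell(R,\widetilde\Phi_t)\le \ell(R_t,\widetilde\Phi_t)$; and $a_{t+1}=\min_R\ell(R,\Phi_t)$ by definition of the $R$-step. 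Hence $a_{t+1}\le a_t$ and, being bounded below, $a_t\downarrow a^*$. Reading the chain $a_{t_j+1}\le \ell(R_{t_j},\widetilde\Phi_{t_j})\le a_{t_j}$ along the subsequence and sending $j\to\infty$ sandwiches $\ell(R^{**},\widetilde\Phi^{**})=a^*=\ell(R^{**},\Phi^{**})$. Plugging $\ell(R^{**},\Phi^{**})-\ell(R^{**},\widetilde\Phi^{**})=0$ into the regularity condition \eqref{equ:50} forces $\|\Phi^{**}-\widetilde\Phi^{**}\|_{\rF}^2\le 0$, i.e., $\Phi^{**}=\widetilde\Phi^{**}$. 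It then remains to identify the limits and verify stationarity. Each $\Phi_t$ satisfies $\Phi_t\Phi_t^T=I_d$ and $\Phi_t{\bf 1}_m=0$, so these constraints persist in the limit $\Phi^{**}$; consequently the centering-whitening map fixes $\Phi^{**}$ (centering leaves it unchanged because $\Phi^{**}{\bf 1}_m=0$, and the whitening factor is $Z=(\Phi^{**}(\Phi^{**})^T)^{-1/2}=I_d$). By continuity of this map near $\widetilde\Phi^{**}=\Phi^{**}$, $\Phi_{t_j}$ converges to the image of $\widetilde\Phi^{**}$, which is $\Phi^{**}$ itself; since $\Phi_{t_j}\to\Phi^*$, we get $\Phi^*=\Phi^{**}$, and then $R^*=R^{**}$ by the same formula \eqref{equ:13}. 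Finally, $R^*=\argmin_R\ell(R,\Phi^*)$ gives $\nabla_R\ell(R^*,\Phi^*)=0$, while $\Phi^*=\widetilde\Phi^{**}=\argmin_\Phi\ell(R^*,\Phi)$ is an \emph{unconstrained} minimizer over $\mathbb{R}^{d\times m}$, so $\nabla_\Phi\ell(R^*,\Phi^*)=0$; together these are exactly the first-order stationarity (KKT) conditions of the unconstrained problem \eqref{equ:9}.

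I expect the main obstacle to be the bookkeeping that welds the monotone-descent sandwich to the invariance of Proposition \ref{prop:1}: one must verify that the whitening step genuinely preserves $\min_R\ell$ (so that $a_t$ is monotone despite $\Phi_t\ne\widetilde\Phi_t$) and that the same full-rank hypotheses make both the pseudo-inverse and the whitening operation continuous at the limit points, allowing the limits of the two argmin steps to be interchanged with the limit along $\{t_j\}$. The regularity condition \eqref{equ:50} is the device that converts the equality of objective values into the equality $\Phi^{**}=\widetilde\Phi^{**}$; without it the two blocks could in principle stall at distinct points sharing the same loss.
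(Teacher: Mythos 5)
Your proposal is correct and follows essentially the same route as the paper's proof: continuity of the closed-form ridge/regression update under the $\sigma_{\min}>0$ hypotheses, the monotone-descent sandwich $a_{t+1}\le\ell(R_t,\widetilde\Phi_t)\le a_t$ via Proposition \ref{prop:1}, condition \eqref{equ:50} to force $\Phi^{**}=\widetilde\Phi^{**}$, and then stationarity of the two block updates at the limit. The only (harmless) local deviation is in deducing $\Phi^*=\Phi^{**}$: you use continuity of the centering--whitening map at $\widetilde\Phi^{**}$ together with the fact that it fixes $\Phi^{**}$, whereas the paper uses that $\Phi_{t_j}$ is the Frobenius-nearest feasible point to $\widetilde\Phi_{t_j}$, so $\|\Phi_{t_j}-\widetilde\Phi_{t_j}\|_{\rF}\le\|\Phi_{t_j-1}-\widetilde\Phi_{t_j}\|_{\rF}\to0$; both arguments are valid.
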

\begin{proof}
By Algorithm \ref{alg:1}, the update sequence $\{(R_{t},\widetilde\Phi_t,\Phi_{t})\}_{t=1}^{\infty}$ satisfies
\begin{equation*}\label{equ:51}
\begin{cases}
R_{t} = \argmin_R \ell(R, \Phi_{t-1}),\\
\widetilde{\Phi}_{t} = \argmin_\Phi \ell(R_{t},\Phi),\\
\Phi_{t}=Z_{t}\widetilde{\Phi}_{t}(I_m-{\bf 1}_m{\bf 1}_m^T/m),
\end{cases}
\end{equation*}
where $Z_{t}$ is defined by \eqref{equ:19}. It implies that $\ell(R_{t+1},\Phi_t)$ is monotone decreasing:
\begin{equation}\label{equ:52}
\begin{aligned}
    \ell(R_{t+1},{\Phi}_{t}) =&\min_{R}\ell(R,\Phi_{t})=\min_{R}\ell(R,\widetilde \Phi_t)\\
    \leq& \ell(R_{t},\widetilde{\Phi}_{t})\leq \ell(R_{t},\Phi_{t-1}),
\end{aligned}
\end{equation}
where the second equality follows from Proposition \ref{prop:1}. By the monotone convergence theorem and the fact that $\ell(R_{t+1},\Phi_t)\geq 0$, both $\ell(R_{t},\Phi_{t-1})$ and $\ell(R_{t},\widetilde\Phi_t)$ converge to the same value.

Consider the sub-sequence $\{t_{j}\}_{j=1}^{\infty}$ such that $\lim_j\Phi_{t_{j}}=\Phi^*$ and  $\sigma_{\min}(T(\Phi^*)T(\Phi^*)^T)=\gamma_1>0$. Then $\sigma_{\min}(T(\Phi)T(\Phi)^T)\geq\gamma_1/2>0$ for any $\Phi\in\mathcal{S}^*_{\epsilon_0}=\{\Phi\mid \|\Phi-\Phi^*\|_{\rF}\leq \epsilon_0\}$ with some sufficiently small $\epsilon_0$. Thus, $XT(\Phi)^T(T(\Phi)T(\Phi)^T)^{-1}$ is well-defined for $\Phi\in\mathcal{S}^*_{\epsilon_0}$ and is a continuous function of $\Phi$ over $\mathcal{S}^*_{\epsilon_0}$. Since $\lim_j\Phi_{t_j}=\Phi^*$, we know $\Phi_{t_j}\in\mathcal{S}^*_{\epsilon_0}$ for sufficiently large $j$. By continuity, we have 
\[
\begin{aligned}
    \lim_j R_{t_j+1}=& \lim_j  XT (\Phi_{t_j})^T(T(\Phi_{t_j})T(\Phi_{t_j})^T)^{\dagger} \\
                            =& XT(\Phi^{*})^T(T(\Phi^*)T(\Phi^*)^T)^{-1}.\label{equ:R*}
\end{aligned}
\]
where we use the definition \eqref{equ:13} of $R_{t_j+1}$. Denote by $R^*=\lim_jR_{t_j+1}$. 

Let us further assume that the sub-sequence $\{t_j\}_{j=1}^{\infty}$  satisfies  $\lim_{j}\Phi_{t_{j}-1}=\Phi^{**}$, $\lim_{j}\widetilde\Phi_{t_{j}}=\widetilde\Phi^{**}$, and $\sigma_{\min}(T(\Phi^{**})T(\Phi^{**})^T)>0$. Then $R_{t_j}$ converges by the same argument and  denote by $R^{**}=\lim_jR_{t_j}$. Moreover, we claim that
$\widetilde\Phi^{**}=\argmin_{\Phi}\ell(R^{**},\Phi)$. Otherwise, there exists $\Phi'$ such that $\ell(R^{**},\Phi')<\ell(R^{**},\widetilde\Phi^{**})$. By continuity, there exists some $j_0$ such that $\ell(R_{t_{j_0}},\Phi')<\ell(R^{**},\widetilde \Phi^{**})$, which contradicts to the relationship \eqref{equ:52}.
Furthermore, we assume
\eqref{equ:50} holds for some constant $\gamma>0$.

Now we are in a position to prove that $\Phi^{**}=\widetilde\Phi^{**}=\Phi^*$, $R^{**}=R^*$, and $(R^*,\Phi^*)$ satisfies the KKT condition. By continuity, we have
\begin{equation*}
\begin{aligned}
    \ell(R^{**},\Phi^{**})=&\lim_{j}\ell(R_{t_j},\Phi_{t_j-1})\\
    =&\lim_{j}\ell(R_{t_j},\widetilde\Phi_{t_j})=\ell(R^{**},\widetilde\Phi^{**}),
\end{aligned}
\end{equation*}
where the second equality follows from \eqref{equ:52}. By \eqref{equ:50}, we have $\Phi^{**}=\widetilde\Phi^{**}$, or equivalently $\|\Phi_{t_{j}-1}-\widetilde\Phi_{t_{j}}\|_{\rF}^2\to 0$ as $j\to\infty$. 
Since $\Phi_{t_{j}}=\argmin_{\Phi\Phi^T=I_d,\Phi{\bf 1}_m=0}\|\Phi-\widetilde\Phi_{t_{j}}\|_{\rF}^2$, we have $\|\Phi_{t_{j}}-\widetilde\Phi_{t_{j}}\|_{\rF}^2\leq\|\Phi_{t_{j}-1}-\widetilde\Phi_{t_{j}}\|_{\rF}^2$ and thus $\|\Phi_{t_{j}}-\widetilde\Phi_{t_{j}}\|_{\rF}^2\to0$ as $l\to\infty$. Then the fact $\Phi_{t_{j}}\to\Phi^*$ implies that $\Phi^{**}=\widetilde\Phi^{**}=\Phi^*$. By \eqref{equ:R*} and its $R^{**}$ version, we have $R^{**}=R^*$. Finally, by taking the limit on the following optimality conditions: 
\begin{equation*}
\begin{cases}
\nabla_\Phi \ell(R_{t_{j}},\Phi)|_{\Phi=\widetilde{\Phi}_{t_{j}}} =0,\\
\Phi_{t_{j}}=Z_{t_{j}}\widetilde{\Phi}_{t_{j}}(I_m-{\bf 1}_m{\bf 1}_m^T/m),\\
\nabla_R \ell(R, \Phi_{t_{j}})|_{R = R_{t_{j}+1}} =0,
\end{cases}
\end{equation*}
we prove that $\{R^*,\Phi^*\}$ satisfies the KKT conditions of the problem \eqref{equ:9}.
\end{proof}

Let us discuss the assumptions in Theorem \ref{thm:5}. First, the feasible set $\mathcal{G}=\{\Phi\in\mathbb{R}^{d\times m}\mid\Phi\Phi^T=I_d,\Phi{\bf 1}_m=0\}$ is compact, so there always exists a sub-sequence $\{t_j\}_{j=1}^{\infty}$ such that $\{\Phi_{t_j}\}_{j=1}^{\infty}$ and $\{\Phi_{t_j-1}\}_{j=1}^{\infty}$ converge. If we assume $\widetilde\Phi_{t}$ is bounded, then we can choose $\{t_j\}_{j=1}^{\infty}$ such that $\widetilde\Phi_{t_j}$ also converges. Second, the assumption $\sigma_{\min}(T(\Phi^*)T(\Phi^*)^T)>0$ and $\sigma_{\min}(T(\Phi^{**})T(\Phi^{**})^T)>0$ are mild, especially when $m$ is much larger than $d$. 
Finally, the assumption \eqref{equ:50} 
characterizes the $\gamma$-strong convexity of $\ell(R^{**},\Phi)$ with respect to $\Phi$ surrounding the minimizer $\widetilde\Phi^{**}$. Theorem \ref{thm:2} provides conditions under which $g(\tau,\eta)$ is strongly convex over a bounded set, which implies the strong convexity of $\ell(R^{**},\Phi)$ with respect to $\Phi$ over a bounded set. 

\section{Regularized Quadratic Matrix Factorization}\label{sec:3}
{Motivated by the discussion at the end of Section \ref{sec:2.2}}, we propose a regularized quadratic matrix factorization (RQMF) method. {Specifically, we add a regularizer $\lambda\|Q\|_{\rF}$ to the original problem \eqref{equ:16} to prevent $\|Q\|_{\rF}$ from being too large. This leads to the following RQMF  problem:}
\begin{equation}
    \min_{R,\Phi\atop \Phi\Phi^T=I_d,\Phi{\bf 1}_m=0}\ell_{\lambda}(R,\Phi)=\|X-RT(\Phi)\|^2_{\rF}+\lambda\|RJ\|_{\rF}^2,\label{equ:39}
\end{equation}
where $R$ and $T(\Phi)$ are given by \eqref{equ:10},
\begin{equation}
    J=
    \begin{bmatrix}
    0 & I_{(d^2+d)/2}
    \end{bmatrix}^T\in\mathbb{R}^{\frac{2+3d+d^2}{2}\times\frac{d^2+d}{2}}, \textnormal{ and } RJ=Q.\label{equ:40}
\end{equation} 
Since the columns of $X$ are assigned equal weights in \eqref{equ:39}, we denote it as RQMF-E in the experiment section to distinguish it from its kernel version. In the following, we can still denote it as RQMF when not causing confusion.

As shall be seen later, RQMF with a proper $\lambda$ avoids over-fitting and thus enjoys a better generalization performance. In the rest of this section, we will first provide an alternating minimization algorithm to solve \eqref{equ:39} and then elaborate how to tune $\lambda$ properly.
Also, we show that no matter how $\lambda$ is chosen, RQMF always outperforms LMF in terms of memorization properties.

\begin{algorithm}[t]
\caption{Regularized Quadratic Matrix Factorization}
\label{alg:2}
\KwData{$X=[x_1,\ldots,x_m]\in\mathbb{R}^{D\times m}$}
\KwResult{$\Phi=[\tau_1,\ldots,\tau_m]\in\mathbb{R}^{d\times m}$, quadratic function $f(\tau)= R_t\xi(\tau)$.}
{
initialize $\Phi_0\in\mathbb{R}^{d\times m}$ as the top $d$ eigenvectors of $G=(X-\bar x{\bf 1}_m^T)^T(X-\bar x{\bf 1}_m^T)$\;
\While{$\|\Phi_{t}^T \Phi_{t}-\Phi_{t-1}^T\Phi_{t-1}\|>\epsilon$}
{update $R_{t} = \argmin_R \ell_\lambda(R, \Phi_{t-1})$ as in \eqref{equ:41}\;
{\For{$i=1$ \KwTo $m$}{solve the $i$-th projection problem $\widetilde\tau_{i,t}=\argmin_{\tau\in\mathbb{R}^{d}}\|x_i-R_t\xi(\tau)\|^2$\;}}
set $\widetilde\Phi_t=[\widetilde\tau_{1,t},\ldots,\widetilde\tau_{m,t}]$\;
update $\Phi_t=Z_t\widetilde\Phi_t(I_m-{\bf 1}_m{\bf 1}_m^T/m)$ with $Z_t$ given by \eqref{equ:19}\;
}
}
\end{algorithm}

To solve \eqref{equ:39}, we adopt the same alternating minimization strategy described in Algorithm~\ref{alg:1}. When $R$ is fixed, minimizing $\ell_{\lambda}(R,\Phi)$ with respect to $\Phi$ is equivalent to minimizing $\ell(R,\Phi)$ with respect to $\Phi$, since the regularizer $\lambda\|RJ\|_{\rF}^2$ is independent of $\Phi$. It thus reduces to the quadratic projection problem discussed in Section~\ref{sec:2.2}.
On the other hand, when $\Phi$ is fixed, minimizing $\ell_\lambda(R,\Phi)$ with respect to $R$ is a ridge regression problem, and the solution can be given in closed form as
\begin{equation}
\begin{aligned}
    \widetilde R=\ &\argmin_{R}\ell_{\lambda}(R,\Phi)\\
    =\ &XT(\Phi)^T(T(\Phi)T(\Phi)^T+\lambda
    JJ^T)^{-1}.\label{equ:41}
\end{aligned}
\end{equation}
Here we use the observation that $T(\Phi)T(\Phi)^T+\lambda JJ^T$ is invertible as shown in Lemma \ref{lma:1} below. Therefore, to solve \eqref{equ:39}, it suffices to replace \eqref{equ:13} in Algorithm~\ref{alg:1} by \eqref{equ:41}, which gives us Algorithm~\ref{alg:2}, the RQMF algorithm.

\begin{lemma}\label{lma:1}
    Suppose $\Phi\Phi^T=I_d$ and $\Phi{\bf 1}_m=0$. If $\lambda>0$, then $T(\Phi)T(\Phi)^T+\lambda JJ^T$ is positive definite. Furthermore, $J^T(T(\Phi)T(\Phi)^T+\lambda JJ^T)^{-1}J$ is also positive definite.
\end{lemma}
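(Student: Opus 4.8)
The plan is to establish both positive-definiteness claims through the kernel characterization of a positive semi-definite matrix, with the orthonormality constraints on $\Phi$ doing the essential work. Write $M = T(\Phi)T(\Phi)^T+\lambda JJ^T$. As the sum of a Gram matrix and $\lambda JJ^T$ with $\lambda>0$, $M$ is manifestly positive semi-definite, so it suffices to rule out nonzero vectors in its kernel. For any $v\in\mathbb{R}^{(2+3d+d^2)/2}$ I would compute $v^TMv=\|T(\Phi)^Tv\|^2+\lambda\|J^Tv\|^2$, so that $v^TMv=0$ forces both $T(\Phi)^Tv=0$ and $J^Tv=0$ simultaneously.

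Next I would decompose $v=[v_0,v_1^T,v_2^T]^T$ according to the block structure $T(\Phi)=[{\bf 1}_m,\Phi^T,\Psi^T(\Phi)]^T$, with $v_0\in\mathbb{R}$, $v_1\in\mathbb{R}^d$, and $v_2\in\mathbb{R}^{(d^2+d)/2}$. Since $J^T=[\,0\ \ I_{(d^2+d)/2}]$, the condition $J^Tv=0$ yields $v_2=0$, whence $T(\Phi)^Tv=v_0{\bf 1}_m+\Phi^Tv_1=0$. This is the one step with genuine content, and it is exactly where the constraints enter. Left-multiplying $v_0{\bf 1}_m+\Phi^Tv_1=0$ by ${\bf 1}_m^T$ and using $\Phi{\bf 1}_m=0$ gives $v_0 m=0$, hence $v_0=0$; then left-multiplying the residual $\Phi^Tv_1=0$ by $\Phi$ and using $\Phi\Phi^T=I_d$ gives $v_1=0$. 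Therefore $v=0$, so $M$ is positive definite and in particular invertible, which also justifies the inverse appearing in \eqref{equ:41}.

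For the second claim I would leverage that $M$ positive definite implies $M^{-1}$ positive definite. For any nonzero $w\in\mathbb{R}^{(d^2+d)/2}$, write $w^TJ^TM^{-1}Jw=(Jw)^TM^{-1}(Jw)$. Because $J=[\,0\ \ I_{(d^2+d)/2}]^T$ has full column rank, $Jw=[\,0,\,w^T]^T\neq0$ whenever $w\neq0$, so the positive definiteness of $M^{-1}$ gives $(Jw)^TM^{-1}(Jw)>0$. Hence $J^TM^{-1}J$ is positive definite.

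The argument is short, and the only substantive obstacle is the kernel computation: showing that the constant row ${\bf 1}_m^T$ together with the $d$ linear rows of $\Phi$ are jointly linearly independent. This is precisely where the two constraints $\Phi{\bf 1}_m=0$ and $\Phi\Phi^T=I_d$ are used—they make these $1+d$ rows mutually orthogonal (after normalizing ${\bf 1}_m$), which is exactly what forces $v_0=v_1=0$. The remaining manipulations are routine.
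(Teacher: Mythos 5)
Your proof is correct and follows the natural route the paper's appendix proof takes: write $v^T(T(\Phi)T(\Phi)^T+\lambda JJ^T)v=\|T(\Phi)^Tv\|^2+\lambda\|J^Tv\|^2$, use $J^Tv=0$ to kill the quadratic block, and then use $\Phi{\bf 1}_m=0$ and $\Phi\Phi^T=I_d$ to show the rows ${\bf 1}_m^T$ and $\Phi$ are jointly independent, forcing $v=0$; the second claim then follows since $J$ has full column rank and $M^{-1}$ inherits positive definiteness. No gaps.
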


The proof of Lemma~\ref{lma:1} is left in the Appendix.
The following proposition shows that no matter how $\lambda$ is chosen, RQMF memorizes the data better than LMF. Recall that LMF corresponds to RQMF with $\lambda\to\infty$, or equivalently  $Q=RJ=0$.

\begin{proposition}\label{prop:7}
For any $\lambda> 0$, the RQMF that solves \eqref{equ:39} memorizes the data better than LMF in the following sense:
\begin{equation}
\|X-R^*T(\Phi^*)\|_{\rF}^2\leq  \|X-R'T(\Phi')\|_{\rF}^2,
\end{equation}
where $(R',\Phi')=\argmin_{R\in\Omega,\Phi}  \ell_\lambda(R,\Phi)$ with $\Omega = \{R\mid R = [c, A, Q], Q=0\}$ is the solution of LMF and
$(R^*, \Phi^* )= \argmin_{R,\Phi} \ell_\lambda(R,\Phi)$ is the solution of RQMF. 
\end{proposition}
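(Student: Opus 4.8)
The plan is to exploit two elementary facts: that the regularizer is inactive at the LMF solution, and that LMF optimizes the same objective $\ell_\lambda$ over a strictly smaller feasible set than RQMF. No delicate estimate is needed; the whole argument is a feasible-set-inclusion comparison.

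First I would record what happens to the penalty at the LMF optimum. Since $(R',\Phi')=\argmin_{R\in\Omega,\Phi}\ell_\lambda(R,\Phi)$ with $\Omega=\{R=[c,A,Q]\mid Q=0\}$, we have $R'=[c,A,0]$, and the identity $RJ=Q$ from \eqref{equ:40} gives $R'J=0$, hence $\lambda\|R'J\|_{\rF}^2=0$. Therefore the regularized objective collapses to the pure memorization error at the LMF solution:
\[
\ell_\lambda(R',\Phi')=\|X-R'T(\Phi')\|_{\rF}^2+\lambda\|R'J\|_{\rF}^2=\|X-R'T(\Phi')\|_{\rF}^2.
\]

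Next I would use the nesting of the feasible sets. The RQMF problem \eqref{equ:39} minimizes $\ell_\lambda$ over all $(R,\Phi)$ subject only to $\Phi\Phi^T=I_d$ and $\Phi{\bf 1}_m=0$, whereas LMF minimizes the identical objective over the subset that additionally imposes $R\in\Omega$. In particular $(R',\Phi')$ is feasible for the RQMF problem, so minimality of $(R^*,\Phi^*)$ over the larger set yields $\ell_\lambda(R^*,\Phi^*)\leq\ell_\lambda(R',\Phi')$. Chaining this with the previous display and discarding the nonnegative penalty term $\lambda\|R^*J\|_{\rF}^2\geq 0$ gives
\[
\|X-R^*T(\Phi^*)\|_{\rF}^2\leq\ell_\lambda(R^*,\Phi^*)\leq\ell_\lambda(R',\Phi')=\|X-R'T(\Phi')\|_{\rF}^2,
\]
which is exactly the claimed inequality.

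There is essentially no main obstacle here: the only points requiring care are that the minimizers $(R^*,\Phi^*)$ and $(R',\Phi')$ exist (which follows from compactness of the Stiefel-type constraint set together with Lemma~\ref{lma:1}, ensuring the inner ridge regression has a finite closed-form solution \eqref{equ:41}), and that the penalty is dropped in the correct direction, i.e.\ used to lower-bound $\ell_\lambda(R^*,\Phi^*)$ by the memorization error rather than the reverse. The conceptual content is simply that enlarging the function class from linear to quadratic can only decrease the attained regularized objective, and since the penalty vanishes for the linear fit, the decrease must already show up in the data-fitting term.
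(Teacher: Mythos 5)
Your proof is correct and is essentially the argument the paper uses: the LMF solution is feasible for RQMF with vanishing penalty ($R'J=Q'=0$), so $\|X-R^*T(\Phi^*)\|_{\rF}^2\leq\ell_\lambda(R^*,\Phi^*)\leq\ell_\lambda(R',\Phi')=\|X-R'T(\Phi')\|_{\rF}^2$. The feasible-set-inclusion chain together with dropping the nonnegative penalty on the RQMF side is exactly the intended reasoning.
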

The proof of Proposition~\ref{prop:7} is left in the Appendix.

\subsection{Tuning Parameter Selection}\label{sec:3.1}
\begin{figure*}[t] 
\centering
\includegraphics[width=.9\linewidth]{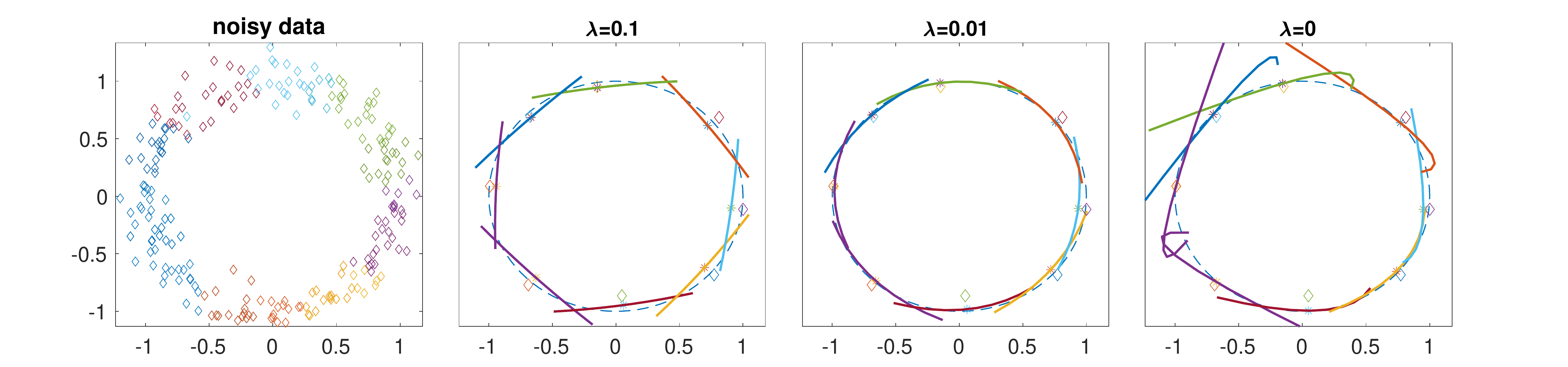}
\caption{Illustration of the effect of $\lambda$ for fitting a circle. This figure displays the generated data and the locally fitted curves with $\lambda=0.1, 0.01, 0$  (from left to right). In the last three figures, the rhombuses stand for the target samples, and the asterisks represent the place where the target samples are projected to fitted curves.}
\label{fig:2}
\end{figure*}

{This section discusses 
how to tune $\lambda$.} 
Before presenting our new tuning method, let us first illustrate the effect of $\lambda$ in Figure \ref{fig:2}. 
We generate 240 data points uniformly on the unit circle and then manually add normal noises obeying ${\cal N}(0,0.1^2 I)$. For each target sample, we fit a curve around the target data using the nearest 40 data points and then project the target data onto the fitted curve. To fit the curve, we use the RQMF algorithm with $\lambda=0.1$, 0.01, and 0. 
{Figure \ref{fig:2} shows that} the RQMF algorithm with $\lambda=0.01$ achieves the best performance. When $\lambda=0.1$ is too large, the RQMF algorithm behaves like linear matrix factorization and tends to use straight lines as the fitted curves. When $\lambda=0$ is too small, the RQMF algorithm tends to overfit data with excessively curved lines. Therefore, it is important to pick a proper $\lambda$.


In what follows, we will describe a new adaptive tuning method.
Recall that when $\Phi$ is fixed, $\widetilde R$ in \eqref{equ:41} is a function of $\lambda$. Define $s(\lambda)=\|\widetilde R(\lambda)J\|_{\rF}^2$ and denote by $s'(\lambda)$ and $s''(\lambda)$ the corresponding first and second derivatives.  
\begin{figure}[t] 
\centering
\includegraphics[width=\linewidth]{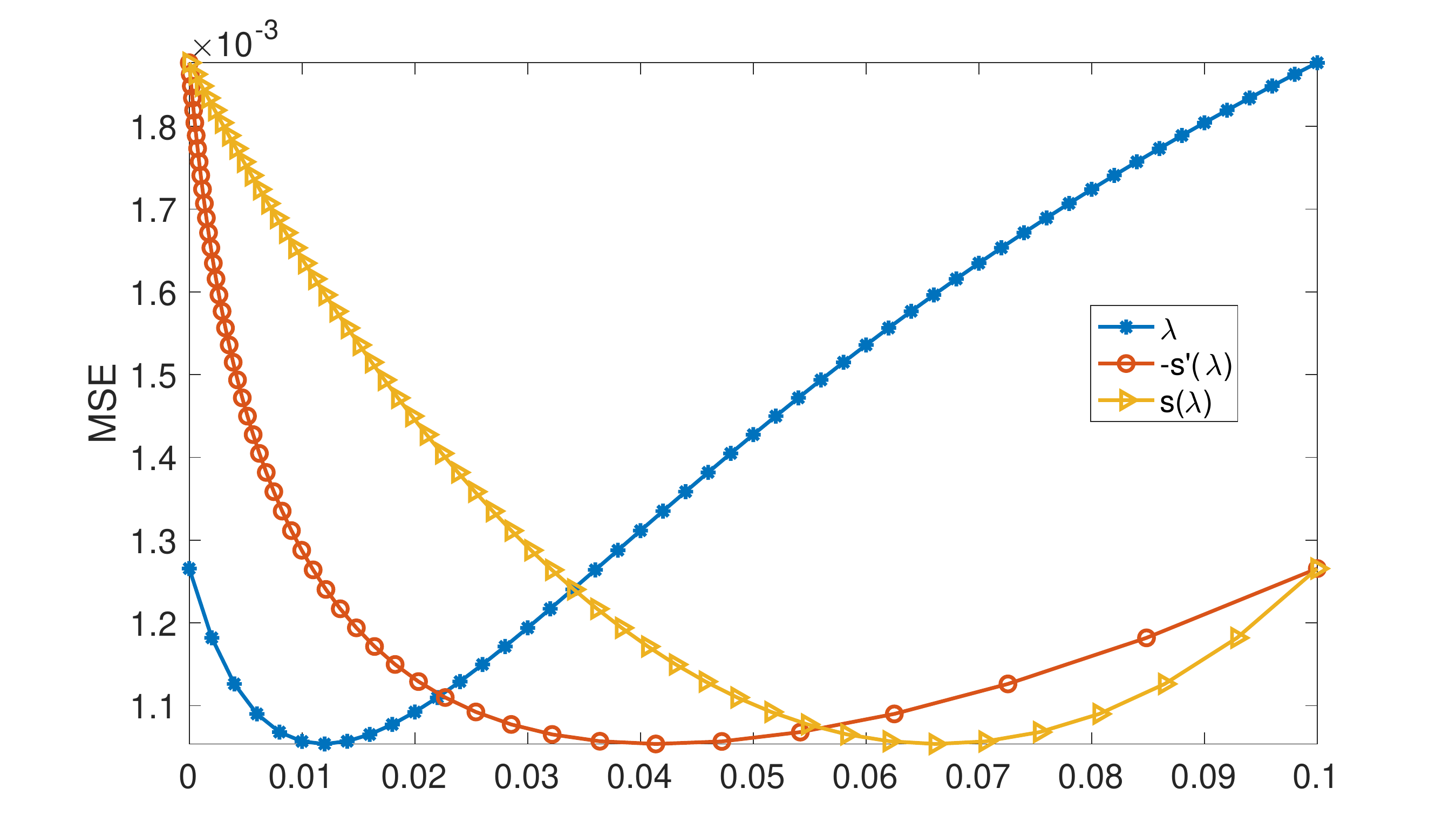}
\caption{The fitting error of RQMF against different $\lambda$, $\delta(\lambda)=-s'(\lambda)$, and $s(\lambda)$. Here we take $\lambda\in[0,0.1]$ and compute $\delta(\lambda)$ and $s(\lambda)$.  
To compare these three curves in the same horizontal axis $[0,0.1]$, we shift and re-scale $\delta(\lambda)$ and $s(\lambda)$ via the function: $f(x)= 0.1\cdot\frac{x-\min(x)}{\max(x)-\min(x)}$. 
}
\label{fig:3}
\end{figure}

Proposition~\ref{prop:2} shows that $s'(\lambda)<0$ and $s''(\lambda)>0$ when $s(\lambda)>0$ and $\lambda>0$. This implies that $s(\lambda)$ is a decreasing function of $\lambda$ while $s'(\lambda)$ is a strictly increasing function of $\lambda$. In particular, the root $\lambda=s'^{-1}(-\delta)$ is unique and can be easily found via the bisection method.
We propose to pick $\lambda=s'^{-1}(-\delta)$ for a prescribed $\delta>0$.

The proposed tuning method is 
reasonable in the following sense. Recall that $s(\lambda)$ is the quantity that the regularizer $\|RJ\|_{\rF}^2$ aims to control and $s'(\lambda)$ measures the sensitivity of the target quantity $s(\lambda)$ with respect to $\lambda$. Thus, our tuning method chooses $\lambda$ corresponding to a prespecified sensitivity level $\delta$.

To illustrate the advantage of {tuning} $\lambda$ via $s'^{-1}(\cdot)$, {we 
implement RQMF 
with 50 different $\lambda$  evenly spaced in $[0,0.1]$.  We use 
samples drawn from the sine curve, that is, $(t_i, \sin(t_i))+\epsilon_i$ with $\{t_i\}_{i=1}^{21}$ evenly distributed in
$[\frac{\pi}{3}, \frac{2\pi}{3}]$ 
and $\epsilon_i\overset{\rm i.i.d.}{\sim} {\cal N}({\bf 0}, 0.03^2 I)$.
For each $\lambda$, we implement RQMF and compute the error, that is, the average distance between the fitted data points and the underlying truth. Also, we calculate the values of $s(\lambda)$ and $\delta(\lambda)=-s'(\lambda)$. Figure \ref{fig:3} displays the error against different $\lambda$, $s(\lambda)$, and $\delta(\lambda)$.
It shows that the error versus $\delta(\lambda)$ curve is the flattest near the optimal error. To achieve a prespecified error, say $1.2\times 10^{-3}$, the feasible choice of $\delta(\lambda)$ has a much wider range than that of $\lambda$ or $s(\lambda)$. 
Thus, it is easier to achieve good performances of RQMF by choosing $\delta(\lambda)$ rather than $\lambda$ or $s(\lambda)$. 
In practice, we choose $\delta>0$ as a constant smaller than $-s'(0)$, where $\Phi$ determining the function $s(\cdot)$ is given by LMF.
}

\begin{proposition}\label{prop:2}
    Suppose $\Phi$ is fixed with $\Phi\Phi^T=I_d$ and $\Phi{\bf 1}_m=0$. Define $\widetilde R(\lambda)$ by \eqref{equ:41} and  $s(\lambda)=\|\widetilde R(\lambda)J\|_{\rF}^2$. Then $s'(\lambda)\leq0$ and $s''(\lambda)\geq0$. Furthermore, the strict inequalities $s'(\lambda)<0$ and $s''(\lambda)>0$ hold if $s(\lambda)>0$ and $\lambda>0$.
\end{proposition}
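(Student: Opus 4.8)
The plan is to express $s(\lambda)$ and its first two derivatives purely in terms of two positive semidefinite matrices and then read off the signs. Write $C = XT(\Phi)^T$, $M = T(\Phi)T(\Phi)^T$, $N = JJ^T$, and $W = C^T C = T(\Phi)X^T X T(\Phi)^T \succeq 0$. By Lemma~\ref{lma:1}, for $\lambda>0$ the matrix $M+\lambda N$ is positive definite, so $P(\lambda) := (M+\lambda N)^{-1}$ is a smooth, symmetric, positive-definite matrix-valued function of $\lambda$, and by \eqref{equ:41} we have $\widetilde R(\lambda) = CP(\lambda)$. Using $JJ^T = N$ and the cyclic invariance of the trace, $s(\lambda) = \|CPJ\|_{\rF}^2 = \operatorname{tr}(J^T P W P J) = \operatorname{tr}(PWPN)$, which already exhibits $s(\lambda)\ge 0$.

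First I would record the resolvent derivative $P'(\lambda) = -PNP$, obtained by differentiating $(M+\lambda N)P = I$. Applying the product rule to $s = \operatorname{tr}(PWPN)$ and noting that the two resulting words are cyclic rotations of one another gives $s'(\lambda) = -2\operatorname{tr}(PNPWPN)$; a second differentiation (there are now three copies of $P$, hence three terms, two of which collapse to the same word) gives $s''(\lambda) = 4\operatorname{tr}(PNPNPWPN) + 2\operatorname{tr}(PNPWPNPN)$.

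The key simplification is a congruence. Set $\widetilde N := P^{1/2} N P^{1/2}$ and $\widetilde W := P^{1/2} W P^{1/2}$, both symmetric and positive semidefinite. Substituting $N = P^{-1/2}\widetilde N P^{-1/2}$ and $W = P^{-1/2}\widetilde W P^{-1/2}$ into the three trace expressions, the interior powers of $P$ telescope to the identity and the boundary factors cancel cyclically, yielding the clean identities $s = \operatorname{tr}(\widetilde W\widetilde N)$, $s' = -2\operatorname{tr}(\widetilde W\widetilde N^2)$, and $s'' = 6\operatorname{tr}(\widetilde W\widetilde N^3)$ (both fourth-order words reduce to $\operatorname{tr}(\widetilde W\widetilde N^3)$). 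Since $\widetilde W\succeq 0$ and $\widetilde N^k\succeq 0$, each of these is a trace of a product of two positive semidefinite matrices and is therefore nonnegative, giving $s'\le 0$ and $s''\ge 0$.

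For the strict inequalities I would diagonalize $\widetilde N = \sum_i \mu_i p_i p_i^T$ in an orthonormal eigenbasis with eigenvalues $\mu_i\ge 0$, so that $s = \sum_i \mu_i w_i$, $-\tfrac12 s' = \sum_i \mu_i^2 w_i$, and $\tfrac16 s'' = \sum_i \mu_i^3 w_i$, where $w_i := p_i^T \widetilde W p_i \ge 0$. If $\lambda>0$ and $s(\lambda)>0$, then some summand $\mu_{i_0}w_{i_0}>0$, forcing $\mu_{i_0}>0$ and $w_{i_0}>0$; this same index makes $\sum_i \mu_i^2 w_i$ and $\sum_i \mu_i^3 w_i$ strictly positive, i.e.\ $s'(\lambda)<0$ and $s''(\lambda)>0$. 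The main obstacle is the bookkeeping in the two differentiation steps and verifying that the cyclic-trace and congruence manipulations are carried out correctly; once $s,s',s''$ are written as $\operatorname{tr}(\widetilde W\widetilde N^k)$ for $k=1,2,3$, both the weak and the strict conclusions follow immediately.
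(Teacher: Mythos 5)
Your proof is correct: the resolvent identity $P'(\lambda)=-PNP$, the reduction of $s,s',s''$ to $\operatorname{tr}(\widetilde W\widetilde N^k)$ for $k=1,2,3$ via the congruence by $P^{1/2}$ (which is legitimate since Lemma~\ref{lma:1} makes $P$ positive definite for $\lambda>0$), and the eigenbasis argument for strictness all check out, including the coefficients $-2$ and $6$. This is essentially the same route the paper takes—differentiating the ridge solution and exploiting positive semidefiniteness—so no further comment is needed.
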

The proof of Proposition~\ref{prop:2} is collected in the Appendix.

\section{Applications to Manifold Learning} \label{application}


This section applies the RQMF algorithm to manifold learning problems.
Assume data $\{x_i\}_{i=1}^m\subseteq\mathbb{R}^D$ are generated near an unknown smooth manifold $\cal M$ of intrinsic dimension $d$. Here we no longer assume all data belong to the same local chart. Instead, we assume data belong to a union of several local charts.
To recover the underlying manifold, we can apply the RQMF algorithm for each local chart.


The performance of this divide-and-conquer strategy depends on choices of specific local charts. Besides, the quality of the fitted points on a single chart cannot be guaranteed uniformly: recovering the central region tends to be of higher quality than recovering the marginal region. Also, for a data point belonging to multiple local charts, the fitted points in different charts are different and it is hard to determine which one is the best. To address these challenges, we propose an improved divide-and-conquer strategy. This strategy constructs a local chart for each  data point $y$ by finding its nearest $K$ samples or by $\mathcal{N}(y,a)=\{x_i\mid\|x_i-y\|\leq a\}$ for some $a>0$. Then for each target sample, we denoise this particular data point by applying the RQMF algorithm to the corresponding chart. Compared with the original divide-and-conquer strategy, our strategy treats each data point as an individual problem and improves the accuracy.

{ In a more general form, we could use a kernel function $K_h(\cdot,\cdot)$ to assign a closer point with higher importance, where $h$ is the bandwidth. For a target sample $y$, we modify the loss function in \eqref{equ:39} as
\begin{equation}
\begin{aligned}
    \min_{R,\Phi\atop \Phi\Phi^T=I,\Phi{\bf 1}_m^T={\bf 0}}\ell_{\lambda,y,h}(R,\Phi)=&\|(X-RT(\Phi))W_h^{1/2}(y)\|_{\rF}^2\\
    &+\lambda\|RJ\|_{\rF}^2,\label{equ:62}
\end{aligned}
\end{equation}
where $X=(x_1,\ldots,x_m)$ is the {\it global} data matrix and $W_h^{1/2}(y)\in\mathbb{R}^{m\times m}$ is a diagonal weight matrix with the $i$-th diagonal element equal to $K_h^{1/2}(x_i,y)$. If we choose $K_h(x,y)=1_{\|x-y\|\leq a}$, then \eqref{equ:62} reduces to the improved divide-and-conquer strategy mentioned above. It is also possible to use other kernel functions, such as the Gaussian kernel. 
To distinguish the kernel RQMF model from the previous equal-weight RQMF, we use RQMF-E to represent equal-weigth RQMF and  RQMF-K to represent RQMF with weights determined by a kernel.
}

We also discuss how to tune $\lambda$ for each sub-problem \eqref{equ:62}.
Picking the same $\lambda$ for all sub-problems is not desirable since the best $\lambda$ depends on the weights $W_h^{1/2}(y)$ and the curvature of the underlying truth, which vary as $y$ changes.
Instead, we suggest using the tuning method proposed in Section \ref{sec:3.1}, which picks $\lambda=s'^{-1}(-\delta)$ for the same prescribed sensitivity level $\delta>0$ for all sub-problems. 
The same $\delta$ would result in different $\lambda$'s for different charts, and this strategy often leads to better fitting accuracies in our experience. 

\begin{table*}[h]
\centering
\caption{Comparisons of different methods on the synthetic spherical dataset in terms of  MSE and SD (in bracket) with varying $K$. \label{tab:2}}
\centering
\resizebox{18cm}{!}{
\begin{tabular}{c | c| c| c| c| c| c |c |c} 
\hline \hline
$K$           & {7} & {10}   & {13} & {16} & {19} &22 & 25& 28 \\ 
\hline
RQMF-E   
& \underline{0.0243} (\underline{0.0325}) &\bf{0.0165} ({\bf 0.0227}) & {\bf 0.0122} ({\bf 0.0172}) & {\bf 0.0115} ({\bf 0.0164}) & {\bf 0.0148} ({\bf 0.0256}) & {\bf 0.0130} ({\bf 0.0222}) & {\bf 0.0149} ({\bf 0.0344}) & {\bf 0.0156} ({\bf 0.0356}) \\
RQMF-K
& {\bf 0.0188} ({\bf 0.0281}) & \underline{0.0170} (\underline{0.0270}) &\underline{0.0155} (\underline{0.0231}) & \underline{0.0148} (\underline{0.0233}) & \underline{0.0153} (\underline{0.0242}) & \underline{0.0159} (\underline{0.0252}) & 0.0170 (\underline{0.0266}) &\underline{0.0185} (\underline{0.0280}) \\
Local PCA 
& 0.0437 (0.0521) & 0.0437 (0.0522) & 0.0435 (0.0520) & 0.0432 (0.0519) & 0.0434 (0.0521) & 0.0434 (0.0518) & 0.0434 (0.0517) & 0.0434 (0.0517) \\
KDE
 & 0.0333 (0.0480) & 0.0298 (0.0469) & 0.0302 (0.0483) & 0.0307 (0.0482) & 0.0323 (0.0493) & 0.0342 (0.0499) & 0.0369 (0.0492) & 0.0389 (0.0501) \\
LOG-KDE 
& 0.0278 (0.0417) & 0.0192 (0.0350) & 0.0159 (0.0344) & 0.0155 (0.0348) & 0.0168 (0.0349) & 0.0192 (0.0370) & 0.0230 (0.0395) & 0.0275 (0.0415) \\
Mfit
& 0.0392 (0.0463) & 0.0333 (0.0424) & 0.0262 (0.0347) & 0.0215 (0.0314) & 0.0183 (0.0302) & 0.0160 (0.0299) & \underline{0.0154} (0.0303) & \underline{0.0185} (0.0376) \\
Moving LS
& 0.0420 (0.0844) & 0.0673 (0.1155) & 0.1017 (0.1474) & 0.1506 (0.1666) & 0.1898 (0.1819) & 0.2264 (0.2033) & 0.2544 (0.2060) & 0.2642 (0.2072) \\ \hline\hline
\multicolumn{9}{r}{\tiny Numbers in bold and underlined are the best and second-best results for each column's setting, respectively.}
\end{tabular}}
\end{table*}

\section{Numerical Experiments}\label{experiment}
This section presents numerical experiments on a synthetic manifold learning dataset and two real datasets, including the MNIST handwritten dataset and a cryogenic electron microscopy dataset,  to examine the finite-sample performance of the proposed method. Our goal is to reconstruct the underlying manifold from noisy data and compare our method with five commonly used methods. We first briefly describe these five competitors. 
\begin{itemize}
    \item {\bf Local PCA} For any $x$, it first finds the $K$ nearest data points $\{x_{i_1},...,x_{i_K}\}$ and then computes the covariance matrix $M = \frac{1}{K}\sum_{k = 1}^K (x_{i_k} -c_x)(x_{i_k}-c_x)^T\in\mathbb{R}^{D\times D}$, where $c_x = \sum_{k=1}^K  x_{i_k}/K$ is the center of these samples. Denote by $P\in\mathbb{R}^{D\times D}$ the projection matrix corresponding to the space spanned by the $d$ principle eigenvectors of $M$.
    The denoised point of $x$, a point on the estimated manifold ``projected" from $x$, is given by $x_{\textnormal{new}}=c_x+P(x-c_x)$.

\item {\bf KDE \& LOG-KDE Ridge Estimation} Both methods are special cases of the nonparametric ridge estimation method \citep{genovese2014nonparametric}. Let $\hat p(x)=\sum_{i}K_h(x,x_i)$ be the kernel density estimation (KDE) with the kernel function $K_h(\cdot,\cdot)$ and the bandwidth $h$. KDE ridge estimator estimates the ridge: 
\begin{equation}\label{equ:63}
\begin{aligned}
{\rm ridge}\coloneqq \{x\mid &\Pi^{\perp}(\nabla^2\hat{p}(x)) \nabla \hat{p}(x) = {\bf 0},\\
& \lambda_{d+1}(\nabla^2 \hat{p}(x))<0\},
\end{aligned}
\end{equation}
where $\Pi^{\perp}(\nabla^2 \hat{p}(x)) = I-UU^T$ with $U\in\mathbb{R}^{D\times d}$ given by the top $p$ principal eigenvectors of $\nabla^2 \hat{p}(x)$. Although the ridge in \eqref{equ:63} does not admit close-form solutions, we may use the subspace constrained mean shift (SCMS) algorithm to find the denoised point of any point $x$ and thus the ridge \citep{ozertem2011locally}. Similarly, the LOG-KDE ridge estimation is merely KDE ridge estimation with $\hat p(x)$ replaced by $\log \hat p(x)$.

\item {\bf Mfit} Mfit, proposed by \citep{fefferman2018fitting}, estimates the following manifold:
\$
\left\{x\mid\Pi_x\big(\sum_{i}\alpha(x,x_i)\Pi_i(x-x_i)\big)={\bf 0}\right\},
\$
where $\alpha(\cdot,\cdot)$ is a weight function, $\Pi_i$ is the projection matrix onto the approximate normal space at $x_i$, and $\Pi_x$ is the projection matrix corresponding to the top $D-d$ principal eigenvectors of the matrix $\sum_{i}\alpha(x,x_i)\Pi_i$. Again, we may use the SCMS algorithm to solve this problem.

\item {\bf Moving LS} The moving least square (LS)  consists of two steps \citep{sober2020manifold}. For any $x\in\mathbb{R}^D$, we first find the $d$-dimensional hyperplane $\cal H$ in $\mathbb{R}^{D}$
minimizing the following quantity
\$
{\cal L}_1({\cal H})=\min_{q\in{\cal H},x-q\perp {\cal H}}\sum_{i} \alpha(q, x_i) \rho^2(x_i, {\cal H}),
\$ 
where $\alpha(\cdot,\cdot)$ is a weight function and $\rho(x_i,{\cal H})$ is the  distance between $x_i$ and 
${\cal H}$.
We can construct a coordinate system on $\cal H$ with origin $q$, where $q$ is the projection point of $x$ on $\cal H$.
Using this coordinate system, we can obtain the $d$-dimensional configuration  $x_i'$ of $x_i$ by projecting $x_i$ to $\cal H$. Second, we fit a polynomial function $p:\mathbb{R}^d\to\mathbb{R}^D$ of a given degree by {minimizing the following weighted squares}
\$
{\cal L}_2(p)=\sum_i\alpha(q,x_i)\|p(x_i')-x_i\|_2^2.
\$
The denoised point of $x$ is then given by $p(0)$. In our experiments, we fix the degree of $p$ as two.

\end{itemize}

 \begin{figure}[t]
   \centering
   \includegraphics[width=0.5\linewidth]{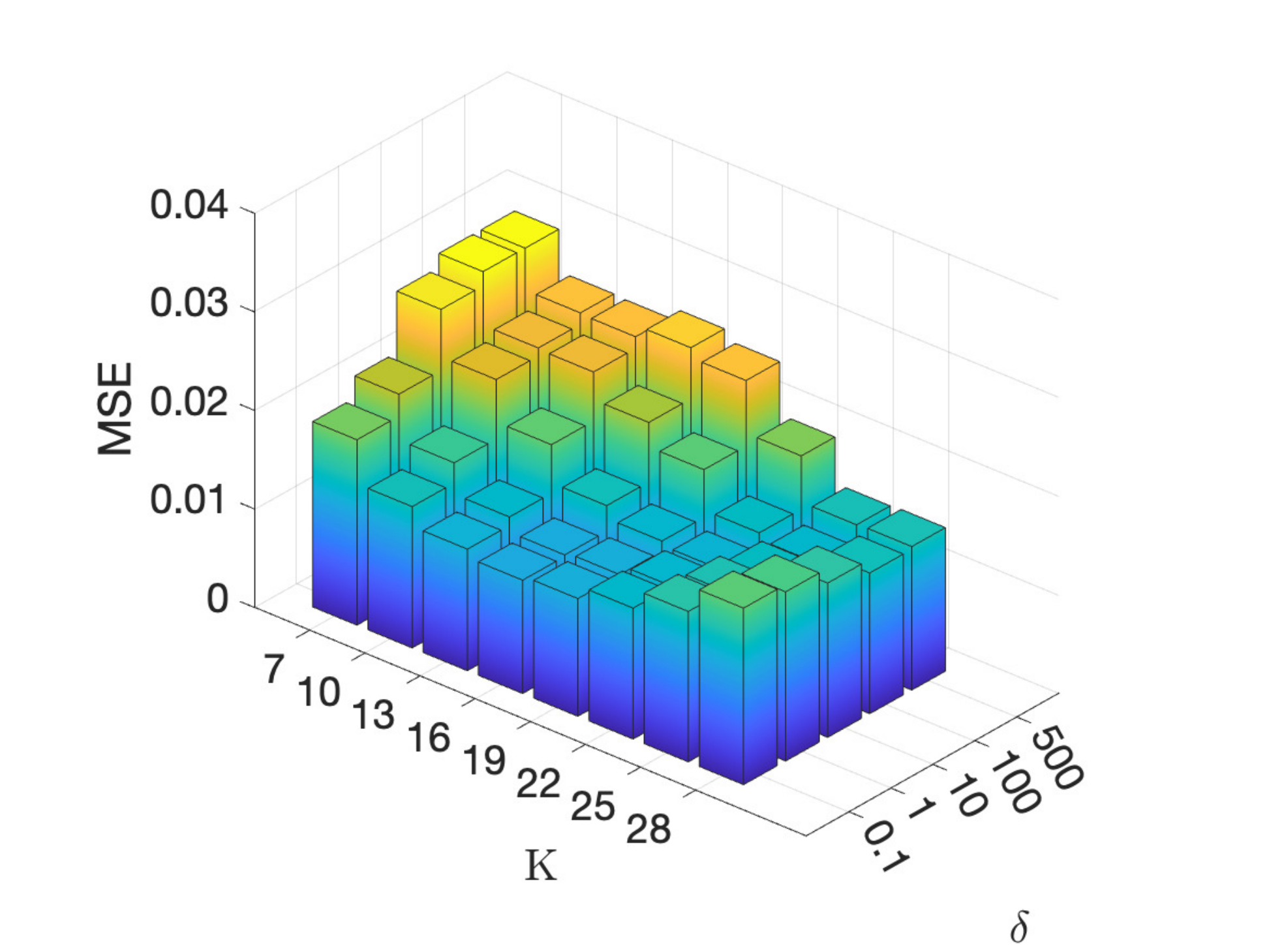} 
   \hspace{-1cm}
   \includegraphics[width=0.5\linewidth]{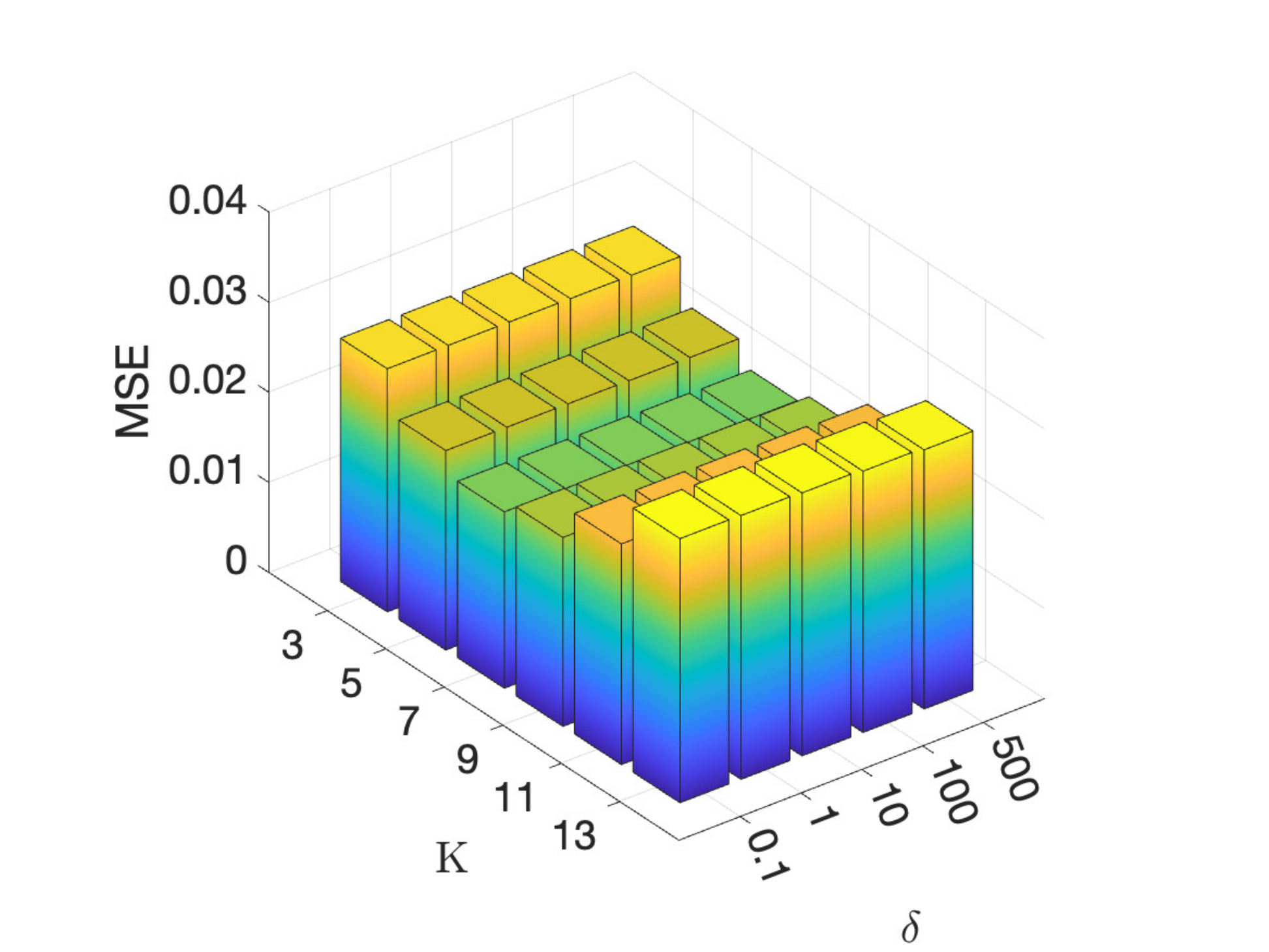} 
   \caption{An illustration of the impact of $\delta$ and $K$ for spherical fitting for RQMF-E (left) and RQMF-K(right).   \label{fig:K-delta-MSE}}
\end{figure}

\begin{figure}[h]
\centering
   \includegraphics[width=\linewidth]{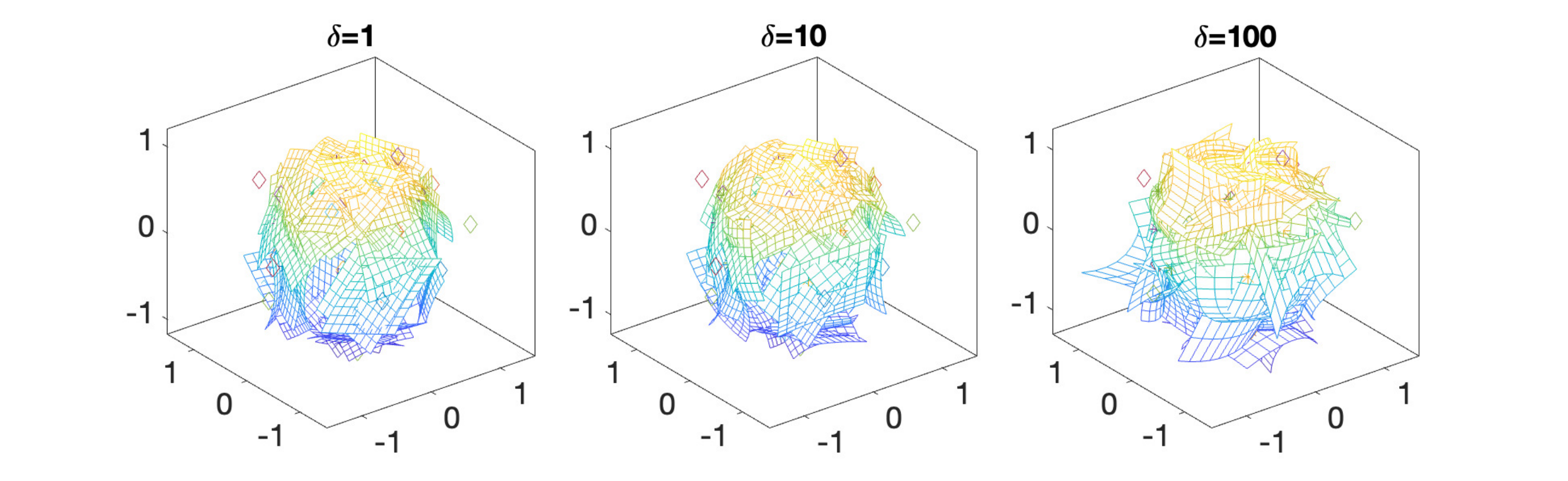}
   \caption{An illustration of local fitted surface under the impact of $\delta$ for RQMF-E when $K = 18$.   \label{fig:RQMF-E-delta}}
\end{figure}

\subsection{A Synthetic Example}\label{sec:6.1}

{
{In this subsection, we compare RQMF-E and RQMF-K with the above five competitors  in a synthetic spherical fitting experiment. We simulate  the noisy data $\{x_i\}_{i=1}^{240}$ by generating 240 points uniformly from the unit sphere $\cal S$ in $\mathbb{R}^{3}$ first and then adding independent noises following ${\cal N}(0, \sigma^2 I)$ with $\sigma=0.2$. All algorithms take in the noisy data $\{x_i\}$ and then  output the denoised data  $\{\widehat{x}_i\}$. To measure the performance of different algorithms, we use the mean squared error (MSE) and standard derivation (SD):
 \begin{gather}
     {\rm MSE} = \sum_{i=1}^m \|\widehat x_i-P_{\cal S}(\widehat x_i)\|^2_{2}/m,\notag\\
 {\rm SD} = \sqrt{\frac{1}{m}\sum_{i=1}^m (\|\widehat x_i-P_{\cal S}(\widehat x_i)\|_2^2-{\rm MSE})^2},\notag
 \end{gather}
 where $P_{\cal S}(\cdot)$ is the projector onto the sphere. 

\begin{figure*}[t] 
   \centering
   \includegraphics[width=0.8\linewidth]{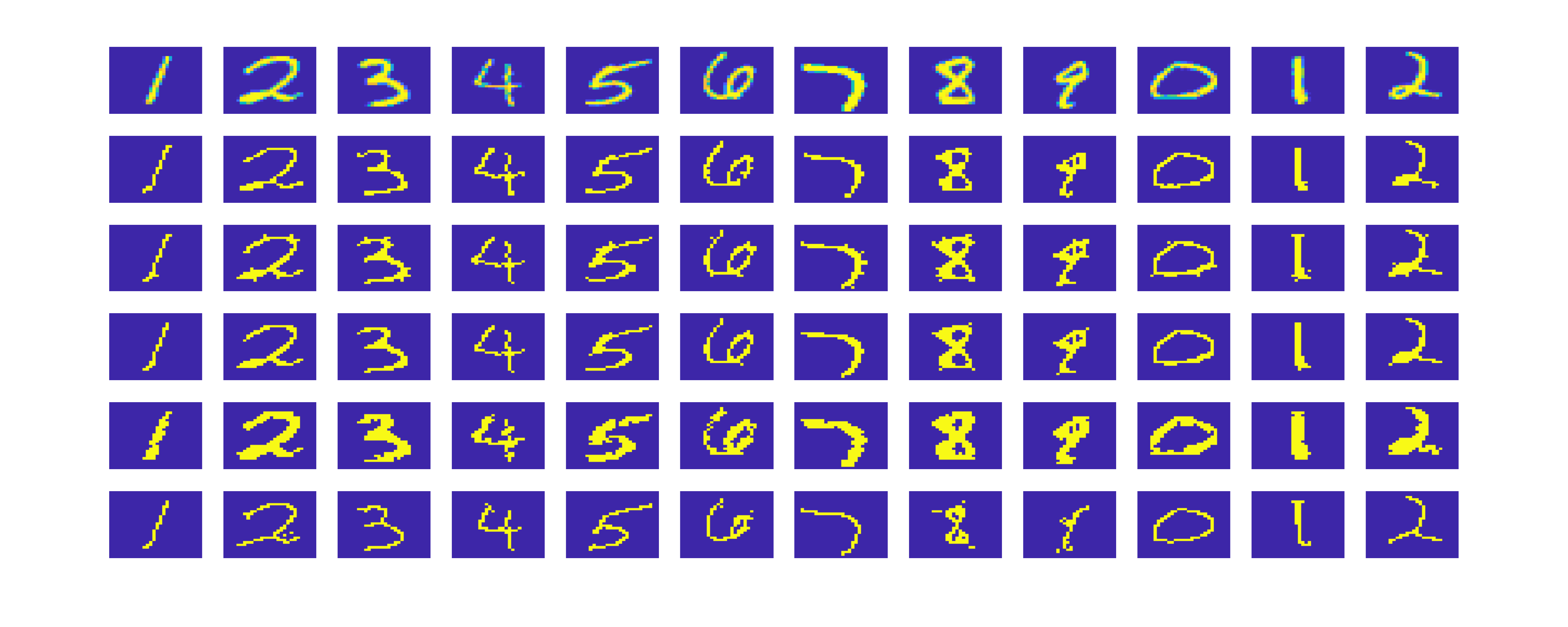} 
   \caption{The first row displays 12 examples from the original MNIST dataset. The second to the sixth rows collect the results for the RQMF-K, KDE, LOG-KDE, Mfit, and Moving LS algorithms, respectively.\label{fig:MNIST RESULT}}
\end{figure*}
\begin{table*}[h!]
\centering
\caption{Comparison of the smoothness measured by the average of the nonzero of the absolute value of $I*w$ corresponding to 12 example images \label{Pixel smoothness}}
\begin{tabular}{c|c|c|c|c|c|c|c|c|c|c|c|c}\hline \hline
Image ID& 1& 2& 3 & 4 & 5 & 6 & 7 & 8 & 9 & 10 & 11 & 12\\ \hline
 Original& \underline{1.0637} & 1.2264 & 1.0306 & \underline{0.8760} & 1.0332 & 0.9499 & 1.0246 & 1.0261 & 1.0068 & 1.0267 & 0.7178 & 1.0326 \\
RQMF-K& \bf 0.4848 & \bf  0.5795 &\bf  0.5168 & \bf 0.5785 & \bf 0.5440 & \bf 0.4851 & \bf 0.5733 & \bf 0.5319 & \bf 0.6011 & \bf 0.5388 & \bf 0.5327 &\bf 0.5470 \\
KDE& 1.0875 & 0.8876 &\underline{ 0.7574} & 0.9382 & \underline{0.8387} & \underline{0.9246} & \underline{0.7489} & \underline{0.6942} & \underline{0.6687} & 0.9070 & 0.6320 &\underline{0.7359} \\
LOG-KDE& 1.1395 & \underline{0.8861} & 0.7732 & 0.9460 & 0.8606 & 0.9329 & 0.7650 & 0.7523 & 0.6979 & 0.8812 & 0.6361 & 0.7764 \\
Mfit& 1.1769 & 0.9158 & 0.8567 & 0.9026 & 0.9520 & 0.9705 & 0.7956 & 0.8274 & 0.9107 & \underline{0.8562} & \underline{0.6189} & 0.7976 \\
Moving LS& 1.1926 & 1.3185 & 1.4126 & 1.0072 & 1.2117 & 1.1529 & 1.4050 & 1.3821 & 1.4752 & 1.2451 & 0.9490 & 1.4394 \\ \hline \hline
\multicolumn{13}{r}{\tiny Numbers in bold and underlined are the best and second-best results for each column's setting, respectively.}
\end{tabular}
\end{table*}

We briefly discuss how  RQMF-E and RQMF-K are implemented.
RQMF-E denoises each data point $y$ using the $K$ nearest neighbors of $y$, where $K$ is a tuning parameter.
To avoid overfitting, we require $K$ to be larger than the rank $(d^2+3d+2)/2$ of $RT(\Phi)$. 
For RQMF-K, we use the Gaussian kernel and set the bandwidth as $h=d_K/3+3$, where $d_K$ is the distance from $y$ to its $K$-th nearest neighbor. 
For both RQMF algorithms, we set the regularization parameter $\lambda$ as $\lambda= s'^{-1}(-\delta)$, where $\delta$ is a tuning parameter.
To select $K$ and $\delta$ for both RQMFs, we compute the MSEs of both RQMF-E and RQMF-K with different  $(K, \delta)$ as shown in Figure~\ref{fig:K-delta-MSE}. {When $K$ is small, the local data approximates a plane and thus a smaller $\delta$ (larger $\lambda$) yields a better performance for RQMF-E.
When $K$ is large, the local data exhibits nonlinear structures, thus it is better to use a larger $\delta$ (smaller $\lambda$) for RQMF-E.} The effect of $\delta$ is visualized in Figure~\ref{fig:RQMF-E-delta}. It shows that smaller $\delta$ tends to fit flatter planes in comparison with larger $\delta$, which coincides with the phenomenon in Figure~\ref{fig:K-delta-MSE}. 
To characterize the good performance region for $\delta$ and $K$ in Figure~\ref{fig:K-delta-MSE}, we choose $\delta=\max\{1,8K-125\}$
to determine $\delta$ based on $K$
for RQMF-E in this experiment. On the other hand, the performance of RQMF-K is relatively robust to the choice of $\delta$, thus we fix $\delta=100$ for different choices of $K$.
}

}

{

Now we compare RQMF-E and RQMF-K with their competitors. The results are collected in Table~\ref{tab:2}.
The results indicate that RQMF-E and RQMF-K outperform other methods for a wide range of $K$. 
When $K=7$, RQMF-K achieves the best performance and when $10<K<28$, RQMF-E achieves the best performance among all methods. 
If we focus on the best performance of different algorithms, the RQMF-E is still favored with the minimal ${\rm MSE}=0.0115$ when $K=16$. 
The superior performances of RQMF demonstrate the benefits of using the curvature information in the denosing procedure.
It is also worth noting that RQMF outperforms Moving LS, which also fits a quadratic polynomial in its second step. 
This is possibly due to fact that RQMF iteratively updates the local representations $\Phi$ of data points, while Moving LS only uses the local coordinates learned in its first step. The estimation error of the local coordinates learned in  Moving LS could lead to a degradation of the final fitting accuracy. 
}

\subsection{An Application to the MNIST Handwritten Digit Dataset}

This subsection compares RQMF-K and 
its competitors on the MNIST handwritten digit dataset \citep{deng2012mnist}. Each image in the dataset consists of $28\times 28$ pixels. 
We use $g(a,b)$ to denote the grey value of an image at pixel $(a,b)$ and each image is determined by such a function $g$.
Only pixels with nonzero grey values are considered, so the dataset for each image is given by
$\{x_i=(a_i,b_i)\in\mathbb{R}^2\mid g(a_i,b_i)>0\}$. In this way, each image can be viewed as a perturbed one-dimensional manifold in $\mathbb{R}^2$ and
our goal is 
to recover 
the underlying 
manifold, 
which is also 
 referred to as the principal curve 
\citep{ozertem2011locally}. 

 The pixel closer to the principal curve tends to have a larger grey value. Thus, 
 it is natural to use the grey values as weights in \eqref{equ:62}. Specifically, for each $y$, we set the diagonal weight matrix
 $W_h(y)$ in \eqref{equ:62} 
 by $(W_h(y))_{ii}=K_h(y,x_i)g(x_i)/s$ for some constant $s$, where 
 $K_h(\cdot, \cdot)$ is a Gaussian kernel and the bandwidth $h$ is
 given by the distance 
 of $y$ to $y$'s $K$-th nearest pixel.
 We use $\delta=100$ to tune the parameter $\lambda=s'^{-1}(\delta)$.
 
Since the smoooth curve contains the most significant signal and the isolated points can be thought as noise, 
we measure the smoothness 
of the image using the convolution of the original image with the Laplace operator 
\$w = 
\left [
\begin{array}{ccc}
0 & 1 & 0\\
1& -4 & 1\\
0 & 1 & 0
\end{array}
\right].
\$
The obtained matrix $I*w$ is a discrete version of the Laplace operator defined for function $f$, i.e., $\Delta^2 f(x,y) = \frac{\partial^2 f}{\partial x^2}+ \frac{\partial^2 f}{\partial y^2}$. The average value of   $I*w$ represents the degree of smoothness of the image $I$. We report the average of the nonzero of the absolute value in $I*w$ in Table \ref{Pixel smoothness}.


\begin{figure*}
\centering
   \includegraphics[width=\linewidth]{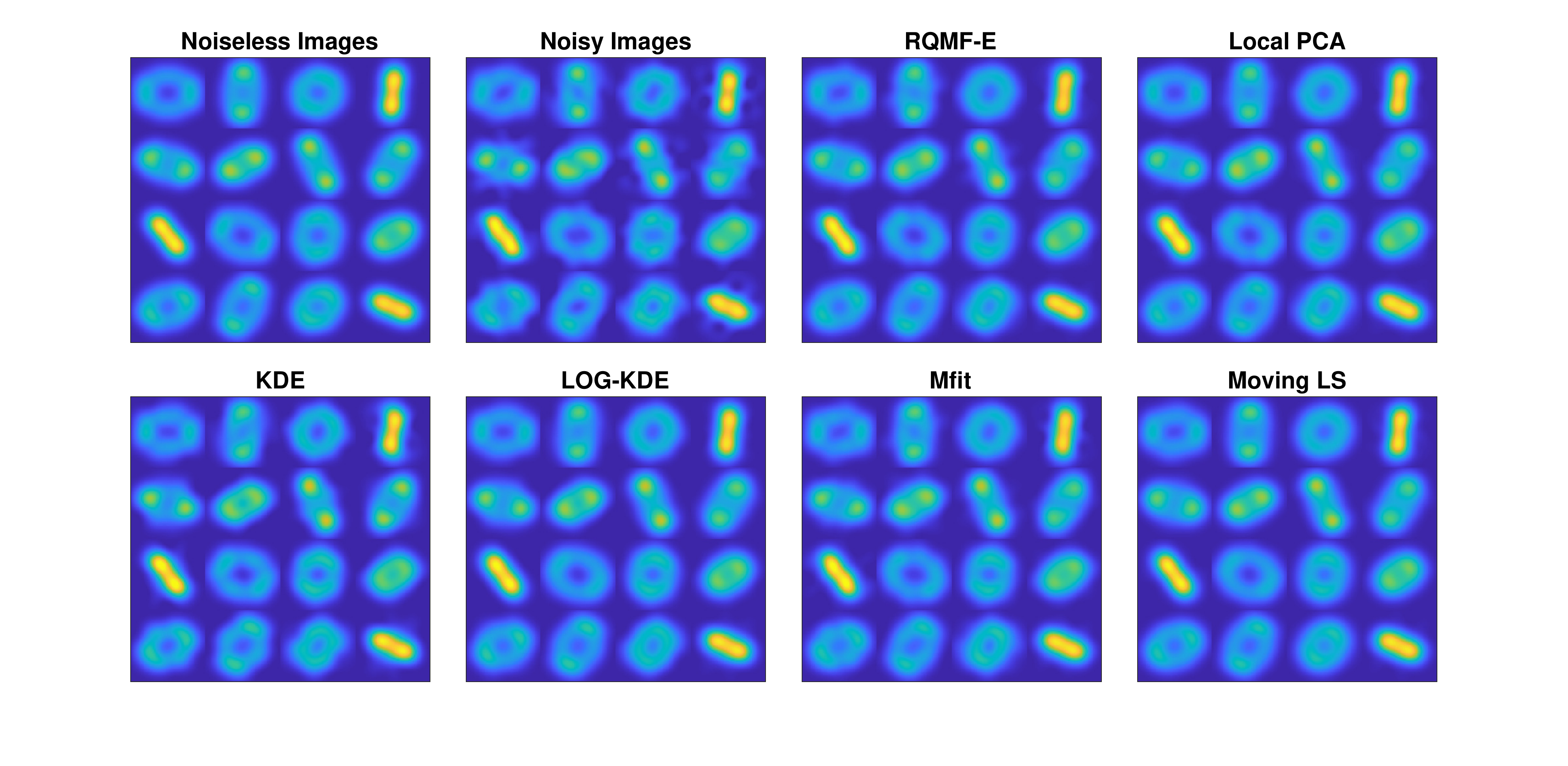} 
   \caption{Visualization the fitting and denoised results of 16 randomly chosen images by RQMF-E and five related manifold learning methods with $K=60$. \label{fig:6}}
\end{figure*}

For each image, we apply all six manifold learning algorithms to recover the principal curve, and the results are displayed in Figure \ref{fig:MNIST RESULT} and Table \ref{Pixel smoothness}.
{It turns out that the denoised images by RQMF-K are smoother than the original images and images output by KDE, LOG-KDE, Mfit. While Moving LS also produces smoother images, it tends to twist the original images too much; see images of 6,8,9. In contrast,
RQMF preserves the major contour of the original images much better.}

\subsection{An Application to Cryo-EM}

This subsection compares RQMF-E and RQMF-K with its competitors
on the Cryo-EM dataset 
\citep{bai2015cryo}.
This dataset consists of $n=2000$ images with shape $64\times 64$. Each image is modeled as a vector in $\mathbb{R}^{4096}$ with elements given by the grey values on all 4096 pixels. The whole dataset is then represented as $\{\iota_i\}_{i=1}^{m}\subseteq
\mathbb{R}^{4096}$,
where $\iota_i$ denotes the $i$-th image. 
The dataset inherently resides on a lower-dimensional manifold in $\mathbb{R}^{4096}$ due to the image generation and processing of Cryo-EM, such as rotation, projection, and blurring by convolution \citep{fefferman2018fitting}. In our experiment, we take the original data as the underlying manifolds, add noises to the original data, 
apply all six manifold learning algorithms to the noisy data, and finally compare their recovery accuracies. 

\begin{figure}[t] 
   \centering
   \includegraphics[width=0.8\linewidth]{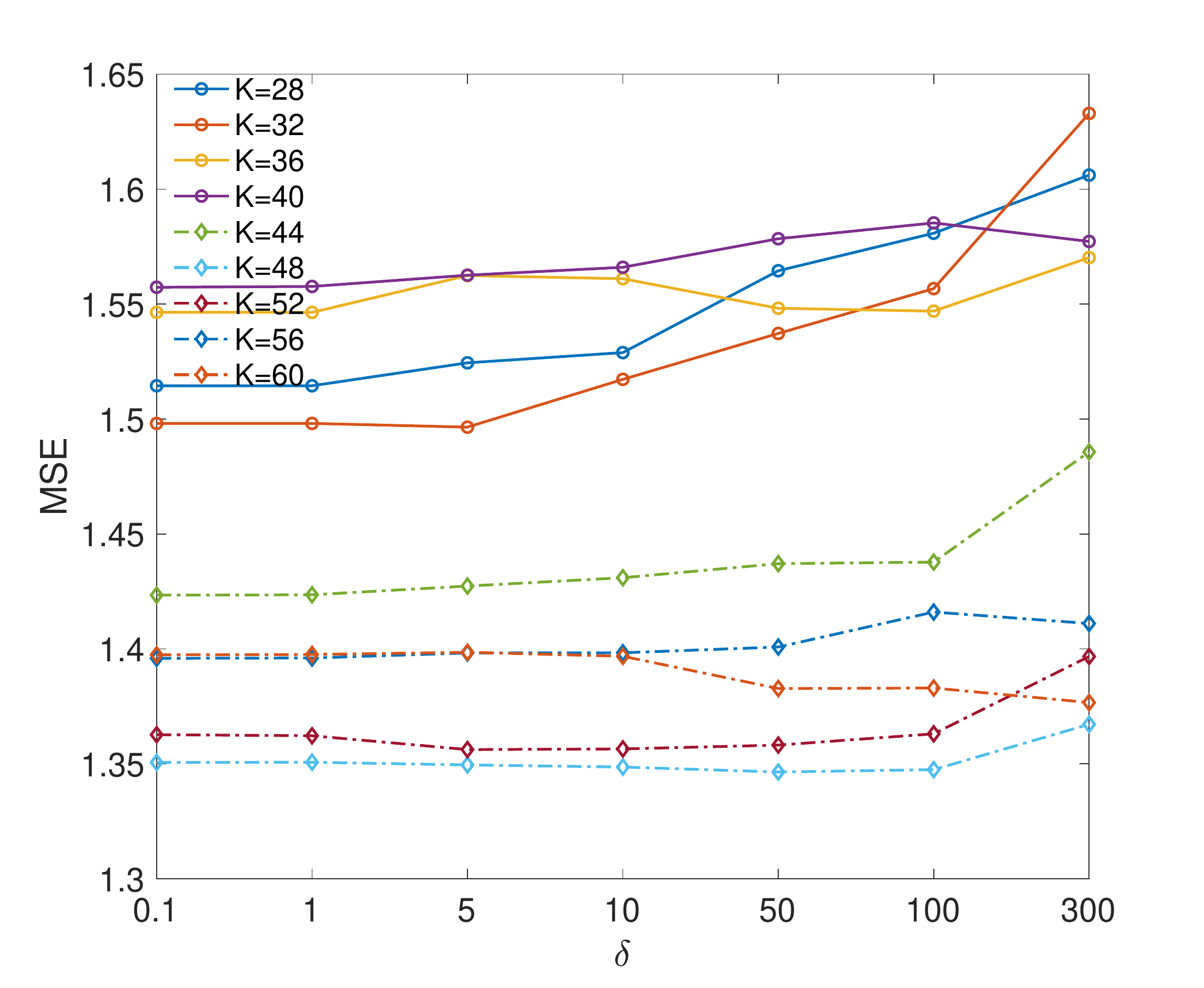} 
   \caption{The performance of RQMF-E  with different $K$ and $\delta$ for Cryo-EM. \label{cryo_K_delta}}
\end{figure}

\begin{table*}[t]
\centering
\caption{
Comparisons of six manifold learning algorithms on the Cryo-EM dataset. We report 
MSE and SD (in bracket) with varying  
$K$.\label{tab:3}}
\begin{tabular}{c|c|c|c|c|c|c}\hline\hline
$K$ & 40 &44 &48&52 &56 & 60\\\hline
RQMF-E& \underline{1.5757}(0.7065) & {\bf 1.4203}(0.6669) & {\bf 1.3482}(0.6013) & {\bf 1.3791}(0.6363) & {\bf 1.4134}(0.6465) & {\bf 1.3764}(0.6623) \\
RQMF-K& 2.1407(1.1057) & 2.1008(1.0804) & 2.0718(1.0612) & 2.0310(1.0385) & 1.9889(1.0064) & 1.9399(0.9618) \\
Local-PCA& 1.8296(0.7531) & 1.8091(0.7211) & 1.7364(0.7662) & 1.7423(0.7476) & 1.7822(0.7958) & 1.8130(0.7553) \\
KDE& 4.5179(1.5050) & 4.4408(1.4786) & 4.3820(1.4143) & 4.3258(1.4975) & 4.3213(1.4728) & 4.2659(1.4698) \\
LOG-KDE& 1.9323(0.9683) & 1.8667(0.9296) & 1.8461(0.9188) & 1.8018(0.8514) & 1.7974(0.8288) & 1.7955(0.8242) \\
Mfit& 2.4032(1.1298) & 2.3861(1.1216) & 2.3477(1.0988) & 2.2902(1.1229) & 2.2852(1.0957) & 2.2225(1.0391) \\
Moving-LS& {\bf 1.5518}(1.0340) & \underline{1.4685}(0.7865) &\underline{1.5185}(0.9541) &\underline{1.5515}(1.2535) & \underline{1.5083}(1.1097) & \underline {1.4567}(0.9147) \\ \hline\hline
\multicolumn{7}{r}{\tiny Numbers in bold and underlined are the best and second-best results for each column's setting, respectively.}
\end{tabular}
\end{table*}

It is computationally expensive to directly fitting the manifold in $\mathbb{R}^{4096}$ 
using any manifold learning algorithm.
To reduce the dimensionality, we approximate the original dataset 
by a $D$-dimensional subspace such that $\iota_i\approx Ux_i$, where $U\in\mathbb{R}^{4096\times D}$ denotes $D$ principal eigenvectors of
$S=\frac{1}{m}\sum_{i}\iota_i\iota_i^T$ and $x_i=U^\top\iota_i\in\mathbb{R}^D$. 
We could fit a $d$-dimensional manifold in $\mathbb{R}^D$ and map such a manifold to the pixel space $\mathbb{R}^{4096}$ via $U$. In what follows, we fix $D=20$. We construct the noisy dataset as $\{\iota'_i=Ux'_i\}_{i=1}^{m}
$ with $x'_i$ given by
\$
x_i'=x_i+\epsilon_i,\quad \epsilon_i\sim N(0,\sigma^2 I_D),\quad i=1,\ldots,m.
\$
Next, we recover a $5$-dimensional manifold in $\mathbb{R}^D$ by applying all six manifold learning algorithms to $\{x'_i\}_{i=1}^m$. For each method and each sample, we use the nearest $K$ samples to fit the local manifold.  In addition, we use the Gaussian kernel in \eqref{equ:62} and for each $y$, we set the bandwidth $h$ as $\|y-x_{i_K}\|_2$, where $x_{i_K}$ is the $K$-th nearest neighbour of $y$. 
Figure~\ref{fig:6} visualizes 16 denoised images using these manifold learning methods with $K=60$. 

To evaluate the performance, we use the following mean squared error and standard deviation of the error between the solution and the real image:
\begin{gather}
{\rm MSE} = \frac{1}{m}\sum_{i=1}^m \|\iota_i -\widehat{\iota}_i\|_2^2,\notag\\
 {\rm SD} = \sqrt{\frac{1}{m}\sum_{i=1}^m (\| \iota_i-\widehat{\iota}_i\|_2^2-{\rm MSE})^2},\notag
\end{gather}
where  $\hat\iota_i=U\hat x_i$ for all $i$ and $\hat x_i$ is the $i$-th fitted point in $\mathbb{R}^D$. Figure~\ref{cryo_K_delta} displays the MSE of RQMF-E with different $K$ and $\delta$. It shows that RQMF-E achieves the best performance when $K=48$ and $\delta=50$. 
To compare different methods, we collect the MSEs of all methods with varying $K$ in Table~\ref{tab:3}.
It can be seen that RQMF-E outperforms other methods in terms of MSE and SD in most settings. If we focus on the best performance of each method, RQMF-E is again favored with an error 1.3482 when $K=48$.
Therefore, by taking the curvature information into account, RQMF-E exhibits a stronger expressive ability than other methods and thus achieves better denoising performance on this dataset.

\section{Concluding Remarks}\label{sec:7}
This paper proposes a quadratic matrix factorization framework to learn the structure of the observed data. 
We develop an alternating minimization algorithm to solve the non-convex quadratic matrix factorization problem as well as a regularized version.
Theoretical convergence properties are established. We also present a novel transformation-based parameter-tuning method for regularized quadratic matrix factorization and intuitively argue its advantages over naively tuning the original regularization parameter. 
Furthermore, we apply the proposed methods to manifold learning problems.
We demonstrate the superiority of the proposed method numerically in a synthetic manifold learning dataset and two real datasets, i.e., the MNIST handwritten dataset and a cryogenic electron microscopy dataset.


There are several interesting directions for future research. First, our work and most related works assume the intrinsic dimension $d$ is known {\it a priori}, while this information is often not available in practice. Thus, it remains an important question to estimate the intrinsic dimensionality $d$ under the quadratic matrix factorization framework. It would also be interesting to characterize the impact if $d$ is misspecified. Second, the noises in the signal-plus-noise model may be heavy-tailed or even adversarial, so it is important to develop robust algorithms. Third, non-negative constraints are widely used in linear matrix factorization  \citep{lee2000algorithms,gillis2015exact}. It is interesting to study how non-negative constraints can be used in QMF to enhance performance.

\ifCLASSOPTIONcaptionsoff
  \newpage
\fi



%



\bibliographystyle{unsrt}
\bibliography{bibfile.bib}

%

\newpage

\begin{IEEEbiographynophoto}{Zheng Zhai}
received his Ph.D. degree from the School of Mathematical Sciences, Zhejiang University, and he is currently working as a postdoc researcher at the Department of Statistical Sciences, University of Toronto. His current research interests include unsupervised learning and manifold learning.
\end{IEEEbiographynophoto}

\begin{IEEEbiographynophoto}{Hengchao Chen}
is currently pursuing his Ph.D. degree in statistics at the University of Toronto. His research interests include matrix factorization and manifold learning.
\end{IEEEbiographynophoto}


\begin{IEEEbiographynophoto}{Qiang Sun}
received his Ph.D. from the University of North Carolina at Chapel Hill. He was an associate research scholar before he joined the University of Toronto as a faculty member. He is currently an associate professor of statistics at the University of Toronto. His research lies in the intersection of statistics, optimization, and learning. 
\end{IEEEbiographynophoto}




\end{document}


%
\title{Supplementary Material to `Quadratic Matrix Factorization with Applications to Manifold Learning'}

\author{Zheng~Zhai,
        Hengchao~Chen,
        and~Qiang Sun}

\maketitle
%
%
%
%


\section{Proof of Proposition \ref{prop:1}}
\begin{proof}\label{proof:1}
Suppose $\Phi'=Z\Phi+u{\bf 1}_m^T$ for some invertible matrix $Z\in\mathbb{R}^{d\times d}$ and $u\in\mathbb{R}^d$. Fix $R$, and we first show the existence of $R'$ such that the equality $\ell(R',\Phi')=\ell(R,\Phi)$ or the stronger condition $R'T(\Phi')=RT(\Phi)$ holds. If $T(\Phi')=MT(\Phi)$ for some constant matrix $M\in\mathbb{R}^{r\times r}$ with $r=\frac{2+3d+d^2}{2}$, then taking $R'=RM$ will conclude this part of the proof. 

It turns out that the relationship $T(\Phi')=MT(\Phi)$ does hold for some constant matrix $M\in\mathbb{R}^{r\times r}$. Recall that  $T(\tau_1,\ldots,\tau_m)=[\xi(\tau_1),\ldots,\xi(\tau_m)]$ by definition. To show $T(\Phi')=MT(\Phi)$ for some constant matrix $M\in\mathbb{R}^{r\times r}$, it suffices to show that $\xi(Z\tau+u)=M\xi(\tau)$ holds for all $\tau\in\mathbb{R}^d$. By definition of $\xi(\cdot)$, each element of $\xi(Z\tau+u)$ is a second-order polynomial of $\{\tau_{[i]}\}_{i=1}^d$. The coefficients only depend on $Z$ and $u$, but they are independent of $\tau$. Together with the definition of $\xi(\tau)$, it implies that $\xi(Z\tau+u)=M\xi(\tau)$ holds for some matrix $M$ independent of $\tau$.

For the remaining part of the proposition, we utilize a dual argument. Since for any $R$, there always exists $R'$ such that $\ell(R',\Phi')=\ell(R,\Phi)$, it is easy to show that $\min_{R}\ell(R,\Phi')\leq\min_{R}\ell(R,\Phi)$. On the other hand, since $Z$ is invertible, we know $\Phi=Z^{-1}\Phi'-Z^{-1}u{\bf 1}_m^T$ and thus $\min_{R}\ell(R,\Phi)\leq\min_{R}\ell(R,\Phi')$. Combining,  $\min_{R}\ell(R,\Phi')=\min_{R}\ell(R,\Phi)$.
\end{proof}

%

\section{Proof of Lemma \ref{lma:1}}
\begin{proof}
    Recall the definition of $T(\Phi)$. The term $M=T(\Phi)T(\Phi)^T+\lambda JJ^T$ can be rewritten as
    \begin{align}
        M&=
        \begin{bmatrix}
        {\bf 1}_m & \Phi^T & \Psi(\Phi)^T
        \end{bmatrix}^T
        \begin{bmatrix}
        {\bf 1}_m & \Phi^T & \Psi(\Phi)^T
        \end{bmatrix}+\lambda JJ^T\notag\\
        &=A_1+A_2+A_3,
    \end{align}
    where 
    \begin{align}
        A_1&=\frac{\beta}{1+\beta}\begin{bmatrix}
        {\bf 1}_m & \Phi^T & {\bf 0}
        \end{bmatrix}^T
        \begin{bmatrix}
        {\bf 1}_m & \Phi^T & {\bf 0}
        \end{bmatrix}+\frac{\lambda}{2}JJ^T,\\
        A_2&=\begin{bmatrix}
        \frac{1}{\sqrt{1+\beta}}{\bf 1}_m & \frac{1}{\sqrt{1+\beta}}\Phi^T & \sqrt{1+\beta}\Psi(\Phi)^T
        \end{bmatrix}^T\\
        &\begin{bmatrix}
        \frac{1}{\sqrt{1+\beta}}{\bf 1}_m & \frac{1}{\sqrt{1+\beta}}\Phi^T & \sqrt{1+\beta}\Psi(\Phi)^T
        \end{bmatrix},\\
        A_3&=\frac{\lambda}{2}JJ^T-\beta\begin{bmatrix}
        {\bf 0} & {\bf 0} & \Psi(\Phi)^T
        \end{bmatrix}^T
        \begin{bmatrix}
        {\bf 0} & {\bf 0} & \Psi(\Phi)^T
        \end{bmatrix}.
    \end{align}
    Here we take $\beta=\frac{\lambda}{\max\{2\sigma_1^2(\Psi(\Phi)),1\}}>0$. By definition of $J$ and the choice of $\beta$, we know $A_3$ is a positive semi-definite matrix. Observe that $A_2$ is also a positive semi-definite matrix. Finally, since $\Phi\Phi^T=I_d$ and $\Phi{\bf 1}_m=0$, $A_1$ is a positive definite matrix with $\sigma_{\min}(A_1)\geq\min\{\beta/(1+\beta),\lambda/2\}>0$. Combining, we know $M$ is a positive definite matrix with $\sigma_{\min}(M)\geq\sigma_{\min}(A_1)\geq\min\{\beta/(1+\beta),\lambda/2\}>0$. Since $J$ is a matrix of full column rank, the matrix $J^TM^{-1}J$ is also positive definite.
\end{proof}

%

\section{Proof of Proposition \ref{prop:7}}

\begin{proof}
Since $R'\in\Omega$, we have $\lambda\|R'J\|_{\rF}^2=0$ and thus $\ell_{\lambda}(R',\Phi')=\|X-RT(\Phi)\|_{\rF}^2$. By definition of $(R^*,\Phi^*)$, we have
\begin{equation}
\begin{aligned}
    \|X-R^*T(\Phi^*)\|_{\rF}^2\leq&\ell_\lambda(R^*,\Phi^*)\\
    \leq&\ell_\lambda(R',\Phi')=\|X-R'T(\Phi')\|_{\rF}^2,
\end{aligned}
\end{equation}
which concludes the proof.
\end{proof}

\section{Proof of Proposition \ref{prop:2}}
\begin{proof}
    Taking derivative of $s(\lambda)$ with respect to $\lambda$, we have
    \begin{equation}
        s'(\lambda)=2\langle \widetilde R(\lambda)JJ^T,\widetilde R'(\lambda)\rangle.\label{equ:42}
    \end{equation}
    Recall that $\widetilde R(\lambda)=XT(\Phi)^T(T(\Phi)T(\Phi)^T+\lambda
    JJ^T)^{-1}$. It follows by direct calculation that
    \begin{equation}\label{equ:43}
    \begin{aligned}
        \widetilde R'(\lambda)
        =&-XT(\Phi)^T(T(\Phi)T(\Phi)^T+\lambda JJ^T)^{-1}JJ^T\\
        &\hspace{1.5cm} (T(\Phi)T(\Phi)^T+\lambda JJ^T)^{-1}\\
        =&-\widetilde R(\lambda)JJ^T(T(\Phi)T(\Phi)^T+\lambda JJ^T)^{-1}.
    \end{aligned}
    \end{equation}
    Substituting \eqref{equ:43} into \eqref{equ:42}, we get
    \begin{equation}
    \begin{aligned}
        s'(\lambda)&=-\textnormal{tr}\left(\widetilde R(\lambda)JJ^T(T(\Phi)T(\Phi)^T+\lambda JJ^T)^{-1}JJ^T\widetilde R(\lambda)^T\right)\\
        &\leq0.
        \end{aligned}
    \end{equation}
    If $\lambda>0$, then by Lemma 4,  $J^T(T(\Phi)T(\Phi)^T+\lambda JJ^T)^{-1}J$ is positive definite. This, together with $s(\lambda)>0$, would imply that $s'(\lambda)<0$.  
    Further, taking derivative of $s'(\lambda)$ with respect to $\lambda$, we obtain
    \begin{equation}
    \begin{gathered}
        s''(\lambda)=3\textnormal{tr}(\widetilde R (\lambda)J\left(J^T(T(\Phi)T(\Phi)^T+\lambda JJ^T)^{-1}J\right)^2\\
        J^T\widetilde R(\lambda)^T)\geq 0,
      \end{gathered}
    \end{equation} 
    Similar to $s'(\lambda)$, the strict inequality $s''(\lambda)>0$ holds if $\lambda>0$ and $s(\lambda)>0$.
\end{proof}





%




%



